\newtheorem{theorem}{Theorem}
\newtheorem{lemma}[theorem]{Lemma}
\newtheorem{corollary}[theorem]{Corollary}
\newtheorem{proposition}[theorem]{Proposition}
\newtheorem{observation}[theorem]{Observation}
\newtheorem{definition}{Definition}
\DeclareMathOperator{\VC}{VC}
\begin{document}

\title{On the Complexity of Learning from Label Proportions}

\author{Benjamin Fish\\Microsoft Research\\benjamin.fish@microsoft.com
  \and Lev Reyzin\\University of Illinois at Chicago\\lreyzin@uic.edu}

\date{}


\maketitle

\begin{abstract}
In the problem of learning with label proportions, which we call LLP learning, 
the training data is unlabeled, and only the proportions of examples receiving each label are given.  The goal is to learn a hypothesis that predicts the 
proportions of labels on the distribution underlying the sample.  This model of learning is applicable to a wide variety of settings, including predicting the number of votes for candidates in political elections from polls.  

In this paper, we formally define this class and resolve foundational questions regarding the computational complexity of LLP and characterize its relationship
to PAC learning. 
Among our results, we show, perhaps surprisingly, that for finite VC classes what can be efficiently LLP learned is a strict subset of what can be leaned efficiently in PAC, under standard complexity assumptions.   We also show that there exist classes of functions whose learnability in LLP is independent of ZFC, the standard set theoretic axioms.
This implies that LLP learning cannot be easily characterized (like PAC by VC dimension).


\end{abstract}

\section{Introduction}

In this paper, we investigate the complexity of the learning problem of estimating the proportion of labels for a given set of instances.  For example, this problem appears when predicting the proportion of votes for a given candidate~\cite{kuck2005learning}; correctly predicting how each individual votes is not required, only which candidate will win.  Variants of this problem also appear in many other domains, including in consumer marketing~\cite{chen2006learning}, medicine and other health domains~\cite{hernandez2013learning,wojtusiak2011using}, image processing~\cite{kuck2005learning}, physical processes~\cite{musicant2007supervised}, fraud detection~\cite{rueping2010svm}, manufacturing~\cite{stolpe2011learning}, and voting networks~\cite{FishHR16}.

In classical Probably Approximately Correct (PAC) learning~\cite{valiant84learnable}, we are given labeled data instances from a distribution, and in the idealized case, must find a function that labels all of the data consistent with the observations.  In less constrained settings, the goal is to find a function of low error, or at least of error as low as possible on the data presented to the algorithm.  There is substantial literature on classical PAC learning outside the scope of this work; see e.g.~\cite{shalev2014understanding} for a survey.  Once the classifier is found, it is easy to find the proportion of instances with a given label by invoking the classifier on the instances.  Algorithms for estimating the proportion of labels with labeled data have been introduced before, for example by Iyer et al.~\cite{iyer2014maximum}.

However, getting instances with attached labels, as assumed in classical PAC learning, is often difficult. Sometimes this is due to limits on the measurement process~\cite{hernandez2013learning,kuck2005learning,musicant2007supervised,stolpe2011learning}. At other times, before datasets are released, labels are purposely detached from their instances in order to maintain privacy~\cite{chen2006learning,rueping2010svm,wojtusiak2011using}.  Instead, only the proportion of labels are given for a group of sample instances.  For example, in estimating who will win an election, pre-election polls only release the percentage of people planning to vote for a given candidate.  Quadrianto et al.~\cite{quadrianto2009estimating} give several other examples where the only data available is of this form.

The goal is then to learn a classifier from a hypothesis class that is able to correctly predict the proportions of labels from a hidden distribution using a training set which consists of a set of instances and the proportions of labels of that set of instances.  This is the learning-theoretic problem we formalize and tackle in this paper.
The proportion of labels may be inferred by first finding a classifier that predicts the labels for each instance~\cite{patrini2014almost,quadrianto2009estimating,rueping2010svm,yu2013propto}.  Alternatively, Iyer et al.~\cite{iyer2016privacy} propose inferring
the proportion of labels directly.


Yu et al.~\cite{yu2014learning} introduce a version of a model for learning from label proportions.  In their model, each bag of examples comes with the proportions of each label in that bag, and each bag is drawn i.i.d.~from a distribution over bags.  They give some of the first sample complexity guarantees.  Another approach is where the examples are drawn~i.i.d., but the bags may be an arbitrary partition of the examples, as in~\cite{rueping2010svm,stolpe2011learning}.  Compared to these `bag' models, our model of learning from label proportions corresponds to the `one-bag case' with binary labels, where each example is drawn i.i.d.~from an arbitrary distribution.  However, as we demonstrate, this model is already interesting to study.  We formalize this as a PAC-like learning model, which allows us to compare the difficult of learning a hypothesis class in classical PAC learning to learning a hypothesis class in this model, which we call Learnable from Label Proportions (LLP).

In particular, we give the following results, including the first computational hardness results for learning label proportions.
After formally defining the model in Section~\ref{sec:model}, we start in Section~\ref{sec:ind} by pointing out that LLP satisfies a natural uniform convergence property, meaning that any class with finite VC dimension is learnable in this model.  However, LLP is not characterized by finite VC dimension:  We give a simple class of functions with infinite VC dimension that is learnable in this model.  In addition, we give a class of functions whose learnability under label proportions is independent
of ZFC, implying that learnability in this model does not admit any simple characterization like VC dimension.

In Section~\ref{sec:subset}, we compare efficient PAC to efficient LLP, where the learning algorithm must take only polynomial time in the size of its input.  We show that for classes with finite VC dimension, if it is efficiently learnable from label proportions, it is also efficiently properly PAC learnable.  We then go on to show that learning from label proportions can be harder than PAC learning in Section~\ref{sec:hardness}:  classes that are efficiently PAC learnable like parities and monotone disjunctions are hard to learn in this setting.  Finally, in Section~\ref{sec:learnable} we give some positive results indicating cases where it is possible to PAC learn from label proportions.  We also show that $n$-dimensional half-spaces over the boolean cube are learnable from label proportions under the uniform distribution.

\section{Model and sample complexity}\label{sec:model}

For $c$ a function $\{0,1\}^n\rightarrow\{0,1\}$ and $D$ a distribution over the domain of $c$, we will call $p_c$ the percentage of positive labels in this distribution, i.e.~$p_c = \mathbb{P}_{x\sim D}[c(x) = 1]$.
For a given sample $S$, we call the percentage labeled positively as $\hat{p}_c = \frac{1}{|S|}\sum_{s\in S}c(x)$.  Where clear, we will abbreviate these as $p$ and $\hat{p}$ respectively.

In this setting, each example $x$ drawn from $D$ has a hidden label $c(x)$, but the learning algorithm does not get to see examples with labels.  Instead, the algorithm only gets to see the set of unlabeled examples $S$ and $\hat{p}_c$, the percentage of $S$ labeled positively by $c$.  The goal is to find a function $h$ in a hypothesis class $H$ such that $p_c$ should be close to $p_h$ with high probability.

\begin{definition}[Learnable from Label Proportions (LLP Learnable)]\label{defn:pac}
A class of functions $H$ is {\bf learnable from label proportions (LLP learnable)} if there is an algorithm $A$ such that for every target function $c$ in $H$, any distribution $D$ over $\{0,1\}^n$, and for any $\epsilon,\delta >0$, given $m\ge poly(1/\epsilon,1/\delta, n, size(c))$ examples drawn i.i.d.~from $D$ and $\hat{p}_c$, returns a hypothesis $h$ in $H$ such that 
\[\mathbb{P}[|p_c-p_h| \le\epsilon] \ge 1-\delta.\]
\end{definition}

%

If in addition, $A$ is an efficient algorithm (i.e.\ running in time polynomial in the size of its input), then we call such a class \textbf{efficiently learnable
from label proportions (efficiently LLP learnable)}.

We refer to the classes of LLP learnable and efficiently LLP learnable functions as \textbf{LLP} and \textbf{efficient LLP}, respectively.

In general, we may consider agnostic or improper versions of this PAC model.  However, improper learning from the class of all functions here is very easy:  We can efficiently learn with a sample complexity that only depends on $\epsilon$ and $\delta$:

\begin{observation}
The sample complexity for improper PAC learning from label proportions is $O\left(\frac{\ln(1/\delta)}{\epsilon^2}\right)$.
\end{observation}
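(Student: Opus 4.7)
The plan is to observe that improper learning lets us output any function whatsoever, so the learner's only task is to estimate the scalar $p_c\in[0,1]$; once we have a good estimate $\hat p$, we can simply output a (possibly randomized) hypothesis $h$ constructed so that $p_h=\hat p$.

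First, I would note that $\hat p_c = \frac{1}{|S|}\sum_{x\in S} c(x)$ is the empirical mean of $|S|=m$ i.i.d.\ Bernoulli$(p_c)$ random variables. Applying Hoeffding's inequality yields
\[
\mathbb{P}\bigl[\,|\hat p_c - p_c| > \epsilon\,\bigr] \le 2\exp(-2 m \epsilon^2).
\]
Setting the right-hand side to at most $\delta$ and solving gives the bound $m \ge \frac{\ln(2/\delta)}{2\epsilon^2} = O\!\left(\frac{\ln(1/\delta)}{\epsilon^2}\right)$, which is the claimed sample complexity.

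Next, since we are in the improper setting, I would exhibit a hypothesis $h$ satisfying $p_h=\hat p_c$. The cleanest choice is the randomized constant function that, on every input, outputs $1$ with probability $\hat p_c$ and $0$ otherwise; then $p_h = \mathbb{P}_{x\sim D}[h(x)=1] = \hat p_c$ exactly. (If one insists on a deterministic $h$, any measurable set $A\subseteq\{0,1\}^n$ with $D(A)=\hat p_c$ works, up to a negligible rounding on the discrete cube.) With this choice, $|p_c - p_h| = |p_c - \hat p_c|$, and the Hoeffding bound above gives $|p_c-p_h|\le\epsilon$ with probability at least $1-\delta$.

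There is no real obstacle here: the only subtlety is the distinction between estimating the scalar $p_c$ and learning a classifier. The observation's point is precisely that, in the improper regime with no hypothesis-class restriction, LLP reduces to the former, which collapses to a one-dimensional mean estimation problem and has the standard Hoeffding rate independent of both $n$ and $\text{size}(c)$.
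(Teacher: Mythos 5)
Your proof is correct and follows exactly the same approach as the paper's: construct the randomized hypothesis that outputs $1$ with probability $\hat p_c$ so that $p_h = \hat p_c$ exactly, then apply a Hoeffding bound to conclude $|\hat p_c - p_c| \le \epsilon$ with probability $1-\delta$ using $O(\ln(1/\delta)/\epsilon^2)$ samples. You simply spell out the Hoeffding calculation that the paper's proof outline leaves implicit.
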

\begin{proof}[Proof outline]
In improper learning, it is easy to find a function $h^*$ so that not only does $\hat{p}_{h^*} = \hat{p}$, but also $p_{h^*} = \hat{p}$:  e.g.~$h^*$ may be a randomized function that on any input returns $1$ with probability $\hat{p}$ and $0$ otherwise.  Then $p_{h^*} = \hat{p}$ and a Hoeffding bound implies that $\hat{p}$ is close to $p$.
\end{proof}

For example, if the task is to predict the proportion of votes for a given candidate using only a single poll, \emph{improper} learning in this model is easy simply by virtue of the fact that $\hat{p}$ is an unbiased estimator for $p$.  However, the hypothesis $h^*$ described above will not be a realistic model of voting.  So proper learning corresponds to finding a realistic model of voting, one which describes a relationship between examples and labels, that also predicts the proportion of votes correctly.  For this reason, for the remainder of this paper, we will only consider \emph{proper} PAC learning from label proportions.  

Definition~\ref{defn:pac} is a distribution-free setting, but when the distribution is known, sample complexity also may be independent of the VC-dimension.

\begin{observation}\label{obs:gap_sample_complexity}
Let $D$ be a known distribution.  Let \[\beta = \inf_{\substack{h,h'\in H:\\ h\neq h'}} |p_h-p_{h'}|.\]  Then the sample complexity for PAC learning from label proportions the hypothesis class $H$ is $O\left(\frac{\ln(1/\delta)}{\beta^2}\right)$.
\end{observation}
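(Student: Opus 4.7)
The plan is to exploit the fact that, because the distribution $D$ is known, the quantity $p_h$ can be computed exactly for every $h \in H$. Consequently the only thing the algorithm actually needs from the sample is a sufficiently accurate estimate $\hat{p}_c$ of $p_c$, and the gap $\beta$ is the precision at which the identity of the target (as a $p$-value) can be pinned down.

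First I would handle the trivial case $\beta = 0$, where the stated bound is vacuous. Assuming $\beta > 0$, note that by definition of $\beta$ the map $h \mapsto p_h$ is injective on $H$, and any two distinct hypotheses have $p$-values at least $\beta$ apart. Draw $m = O(\ln(1/\delta)/\beta^2)$ examples i.i.d.\ from $D$ and form $\hat{p}_c = \frac{1}{m}\sum_{s \in S} c(s)$. A Hoeffding bound with an appropriate constant in front of $m$ gives
\[
  \mathbb{P}\!\left[\,|\hat{p}_c - p_c| < \beta/3\,\right] \;\ge\; 1-\delta.
\]

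Next, since $D$ is known, the algorithm computes $p_{h'}$ for each $h' \in H$ and returns $h = \arg\min_{h' \in H} |p_{h'} - \hat{p}_c|$. Conditioning on the good event above, the true target $h^* \in H$ (the one with $p_{h^*} = p_c$) satisfies $|p_{h^*} - \hat{p}_c| < \beta/3$, so the minimum is at most $\beta/3$. Any other candidate $h' \ne h^*$ has $|p_{h'} - p_c| \ge \beta$ by the gap assumption, hence by the triangle inequality $|p_{h'} - \hat{p}_c| > 2\beta/3$. Therefore the argmin must be $h = h^*$, giving $|p_h - p_c| = 0 \le \epsilon$ for any $\epsilon > 0$, which is strictly stronger than what the definition of LLP learnability requires.

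The whole argument is essentially a concentration plus a separation argument, and there is no real obstacle: the main thing to be careful about is to set the Hoeffding constant so that the deviation is strictly less than $\beta/2$ (I chose $\beta/3$ to make the triangle inequality step clean). The observation is about sample complexity only, so the fact that minimizing $|p_{h'} - \hat{p}_c|$ over $H$ may be computationally expensive is irrelevant here.
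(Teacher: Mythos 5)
Your proposal is correct and follows essentially the same route as the paper: a Hoeffding bound to concentrate $\hat{p}_c$ within a fraction of $\beta$ of $p_c$, followed by a separation argument showing the nearest value in $\{p_h : h \in H\}$ must be $p_c$ itself (the paper uses $\beta/2$ where you use $\beta/3$, but this is immaterial). Nothing is missing.
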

\begin{proof}[Proof outline]
Here, we can use another Hoeffding bound to get that with high probability, $\hat{p}$ is within $\beta/2$ of $p_c$, for $c$ the target hypothesis.  But the definition of $\beta$ implies that there is exactly one value $p_{c^*}$ in $\{p_c: c\in H\}$ such that $\hat{p}$ is closer to $p_{c^*}$ than any other value in $\{p_c: c\in H\}$.  Then with high probability $p_c=p_{c^*}$.  Thus an algorithm may output any $h$ such that $p_{h}=p_{c^*}$.
\end{proof}

In Section~\ref{sec:learnable}, we return to distribution-specific learning.


\section{Comparing PAC to LLP}\label{sec:ind}

We start by considering sample complexity in the distribution-free setting.  Is learning from label proportions harder or easier than PAC learning?  We first give a uniform convergence result that implies that if a class has finite VC dimension, than it is learnable from label proportions.  Despite this result, unlike the PAC setting, VC dimension does \emph{not} characterize learning from label proportions.  Perhaps surprisingly, we show that there are classes with infinite VC dimension that are learnable from label proportions and those whose learnability from label proportions are independent from ZFC (for more, see below).

First, we prove a uniform convergence bound.  Following the proof of the equivalent bounds in PAC learning under an arbitrary loss function (see Shalev-Shwartz and Ben-David~\cite{shalev2014understanding}), we can show the same bounds also hold here.  Namely, we can use the VC dimension of a hypothesis class $H$ to bound generalization error.  We denote this quantity by $\VC(H)$.  In particular, we have:
\begin{theorem}[Uniform convergence]\label{thm:vc_gen_bound}
For target function $c\in H$, with probability at least $1-\delta$, for all $h\in H$,
\[ |p_h-p_c| \le |\hat{p}_h-\hat{p}_c| + \sqrt{\frac{8\VC(H)\log(em/\VC(H))}{m}} + \sqrt{\frac{2\log(4/\delta)}{m}}. \]
\end{theorem}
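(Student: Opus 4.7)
The strategy is to mirror the standard VC-based uniform convergence proof from PAC learning, adapted in a straightforward way: the roles traditionally played by the true and empirical error rates are played here by the true and empirical label proportions $p_h$ and $\hat p_h$. The key object to control is $\sup_{h \in H} |p_h - \hat p_h|$. Once this deviation is uniformly small over $H$, the result will follow by the triangle inequality, using crucially that $c \in H$.

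First, I would view each $h \in H$ as the indicator of the event $\{x : h(x) = 1\}$; this family of events has VC dimension exactly $\VC(H)$, and $p_h, \hat p_h$ are precisely its true and empirical probabilities. Applying Sauer's Lemma together with the classical symmetrization argument (see, e.g., Shalev-Shwartz and Ben-David~\cite{shalev2014understanding}), one obtains that for any $\delta' > 0$, with probability at least $1 - \delta'$ over the sample,
\[
\sup_{h \in H} |p_h - \hat p_h| \le \sqrt{\frac{2\,\VC(H)\,\log(em/\VC(H))}{m}} + \sqrt{\frac{\log(2/\delta')}{2m}}.
\]
I would take $\delta' = \delta/2$, so that $\log(2/\delta')$ becomes $\log(4/\delta)$. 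The triangle inequality then gives
\[
|p_h - p_c| \le |p_h - \hat p_h| + |\hat p_h - \hat p_c| + |\hat p_c - p_c|,
\]
and since $c$ itself lies in $H$, both $|p_h - \hat p_h|$ and $|\hat p_c - p_c|$ are simultaneously dominated by the supremum above. This introduces a factor of $2$ on the deviation bound, which, absorbed into the square roots, turns $\sqrt{2\,\VC(H)\log(em/\VC(H))/m}$ into $\sqrt{8\,\VC(H)\log(em/\VC(H))/m}$ and $\sqrt{\log(4/\delta)/(2m)}$ into $\sqrt{2\log(4/\delta)/m}$, matching the stated expression exactly.

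I do not anticipate a serious obstacle here; no new probabilistic machinery is needed beyond the classical VC/Sauer-Shelah apparatus. The conceptual content is simply that controlling $|p_h - p_c|$ reduces to uniformly controlling the fluctuations of empirical proportions around their expectations over the class $H$, which is a fact about single-event probabilities rather than about pairs of labels. The only delicate part is the numerical bookkeeping to make the constants line up with the theorem's stated form; this is routine and not where any real difficulty lies. If anything, one minor point worth flagging is that the standard PAC uniform convergence statement is most naturally phrased in terms of the $0/1$ loss $\mathds{1}[h(x) \neq c(x)]$ rather than the raw indicator $\mathds{1}[h(x) = 1]$, but the VC dimension of the two induced set systems is the same, so the transfer is immediate.
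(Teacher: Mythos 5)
Your route is genuinely different from the paper's, and the difference is exactly where your constants go wrong. You control $\sup_{h\in H}|p_h-\hat p_h|$ as a single uniform-convergence statement for the set system $\{x: h(x)=1\}$ and then pay the triangle inequality twice, once for $h$ and once for $c$. The paper instead applies the Rademacher/Sauer--Shelah/Massart machinery directly to the \emph{signed} loss $\ell(h,x)=h(x)-c(x)$, whose expectation is exactly $p_h-p_c$ and whose empirical average is exactly $\hat p_h-\hat p_c$; it then converts to absolute values in one step via the reverse triangle inequality $\bigl|\,|p_h-p_c|-|\hat p_h-\hat p_c|\,\bigr|\le\bigl|(p_h-\hat p_h)-(p_c-\hat p_c)\bigr|$. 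Since translating the class by the fixed function $-c$ does not change its Rademacher complexity, the paper pays the symmetrization factor of $2$ only once, whereas your decomposition pays it and then doubles again.

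Concretely, the intermediate bound you quote, $\sup_h|p_h-\hat p_h|\le\sqrt{2\VC(H)\log(em/\VC(H))/m}+\sqrt{\log(2/\delta')/(2m)}$, does not follow from the argument you cite: symmetrization gives $2R_m$ in front, and Massart gives $R_m\le\sqrt{2\VC(H)\log(em/\VC(H))/m}$, so the correct intermediate Rademacher term is $2\sqrt{2\VC(H)\log(em/\VC(H))/m}=\sqrt{8\VC(H)\log(em/\VC(H))/m}$. After your doubling step this becomes $\sqrt{32\VC(H)\log(em/\VC(H))/m}$, a factor of $2$ worse than the theorem's stated $\sqrt{8\VC(H)\log(em/\VC(H))/m}$ (the confidence term, by contrast, does come out right). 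This is only a constant and is harmless for every downstream use of the theorem in the paper, but it means your argument as written proves a weaker inequality than the one stated; the "routine bookkeeping" you deferred is precisely the step that fails. To recover the stated constants you should adopt the paper's device of bounding the deviation of the signed pair difference directly rather than bounding each of $|p_h-\hat p_h|$ and $|p_c-\hat p_c|$ separately.
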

\begin{proof}
Consider the empirical loss $|\hat{p}_h-\hat{p}_c| = \left|\frac{1}{m}\sum_{i=1}^m h(x_i)-c(x_i) \right|$ for sample $S=\{x_1,\ldots,x_m\}$.  Unfortunately its expectation $\mathbb{E}_{S\sim D^m}[|\hat{p}_h -\hat{p}_c|]$ \emph{does not} equal the distributional loss $|p_h-p_c| = |\mathbb{E}_{x\sim D}[h(x)] - \mathbb{E}_{x\sim D}[c(x)]|$, so instead, we consider convergence of the \emph{signed} loss instead:  $\ell(h,x) := h(x)-c(x)$, $\mathcal{L}_D(h) := \mathbb{E}_{x\sim D}[\ell(h,x)]$, and $\mathcal{L}_S(h) := \frac{1}{m}\sum_{i=1}^m \ell(h,x_i)$.  This is a loss function with $|\ell(h,x)| \le 1$, so we can bound generalization error by the empirical Rademacher complexity (see e.g.\ Shalev-Shwartz and Ben-David \cite{shalev2014understanding}):  For $R(\cdot)$ the Rademacher complexity, with probability of at least $1-\delta$, for all $h\in H$,
\[\mathcal{L}_D(h) - \mathcal{L}_S(h) \le 2\mathbb{E}_{z_i\sim D}[R(\{(\ell(h,z_1),\ldots,\ell(h,z_m))\mid h\in H\})] + \sqrt{\frac{2\ln(2/\delta)}{m}}.\]
For convenience, let $d=\VC(H)$.
The Sauer-Shelah lemma implies that $|\{(h(z_1),\ldots,h(z_m)) | h\in H\}| \le \left(\frac{e m}{d}\right)^{d}$,
we also have $|\{(\ell(h,z_1),\ldots,\ell(h,z_m))\mid h\in H\}| \le \left(\frac{em}{d}\right)^{d}$.  Again since $|\ell(h,x)| \le 1$, Massart's lemma then implies that $R(\{(\ell(h,z_1),\ldots,\ell(h,z_m))\mid h\in H\}) \le \sqrt{\frac{2d\log(em/d)}{m}}$.  Thus with probability of at least $1-\delta/2$, for all $h\in H$,
\[\mathcal{L}_D(h) - \mathcal{L}_S(h) \le 2\sqrt{\frac{2d\log(em/d)}{m}} + \sqrt{\frac{2\ln(4/\delta)}{m}}.\]
Repeating this argument for $-\ell(h,z)$, and then applying the union bound, we conclude
\[|\mathcal{L}_D(h) - \mathcal{L}_S(h)| \le 2\sqrt{\frac{2d\log(em/d)}{m}} + \sqrt{\frac{2\ln(4/\delta)}{m}}.\]
But if the signed losses are bounded to each other, than so must the unsigned losses be bounded to each other:
\[||p_h-p_c| - |\hat{p}_h-\hat{p}_c|| \le 2\sqrt{\frac{2d\log(em/d)}{m}} + \sqrt{\frac{2\ln(4/\delta)}{m}}.\]
\end{proof}

This immediately implies that PAC learnable classes form a subset of classes learnable from label proportions.  In the remainder of this section, we show first that it is a strict subset.  We then show that we can find a hypothesis class whose learnability is independent of the Zermelo-Fraenkel set theory (ZFC), the common
axiomatic system of mathematics which, following the work of Ben-David et al.~\cite{BenDavidHMSY19} was suggested by Reyzin~\cite{Reyzin19}
would apply to other models of machine learning, such as this one.  


To demonstrate this, we make use of
a recent result by Ben-David~et~al.~\cite{BenDavidHMSY19}, who defined EMX-learnability and gave a class
of functions whose learnability is independent of ZFC.
First, we give the definition of EMX learnability.

\begin{definition}[Ben-David~et~al.~\cite{BenDavidHMSY19}]
An $(\epsilon,\delta)$-EMX learner for a class of functions $C$ is an algorithm that, for some integer $m = m(\epsilon,\delta)$,
produces a hypothesis $c_S \in C$ for which
$$
\mathbb{P}_{S \sim D^m}[\mathbb{E}_D[c_S] \le \sup_{c \in C}\mathbb{E}_D[c] - \epsilon] \le \delta
$$
for every distribution that is finitely-supported over the $\sigma$-algebra over all subsets of the input space.
\end{definition}

This definition asks the learner to produce a (proper) function having as high distributional weight as possible.  
Ben-David~et~al.~\cite{BenDavidHMSY19} give the following suprising result.

\begin{theorem}[Ben-David~et~al.~\cite{BenDavidHMSY19}]\label{thm:emx_independence}
EMX learnability of finite subsets of $[0,1]$ over finitely supported distributions (over the unit interval) is independent of ZFC.
\end{theorem}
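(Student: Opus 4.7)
The plan is to reduce the question of EMX-learnability of the class $C$ of finite subsets of $[0,1]$ to a purely combinatorial statement about the existence of certain \emph{monotone compression schemes}, and then to identify that combinatorial statement with a cardinality hypothesis on the continuum. Since assertions of the form $2^{\aleph_0}\le\aleph_k$ are known to be consistent with but not provable from ZFC (via G\"odel's constructible universe and Cohen forcing), the learnability question will inherit this independence.

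The first step is to set up the compression framework. A monotone $(m,n)$-compression for $C$ would consist of an encoder mapping any $n$-tuple of points in $[0,1]$ to an $m$-sized sub-multiset, together with a decoder sending the sub-multiset to a hypothesis in $C$ that contains \emph{all} $n$ original points. I would first verify a learning-theoretic equivalence: $C$ is $(\epsilon,\delta)$-EMX learnable with sample complexity $m_0(\epsilon,\delta)$ if and only if such monotone compressions exist with parameters bounded in $1/\epsilon$ and $1/\delta$. The ``if'' direction uses standard sample-compression generalization bounds to convert a compression scheme into an EMX learner, while the ``only if'' direction extracts a compression from a hypothetical learner by running it on carefully structured inputs and exploiting the fact that the EMX guarantee forces the learner's output to cover almost all of the sample mass.

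The second step is to analyze when monotone compressions exist for the class of all finite subsets of $[0,1]$. Here one invokes a Kuratowski-style theorem from infinite combinatorics, which characterizes the existence of maps that cover every $n$-element subset of a ground set by a finite set chosen from fewer than $n$ of its points: such maps exist for $[0,1]$ exactly when $|\mathbb{R}|\le\aleph_k$ for a finite $k$ determined by the compression parameters. Consequently, EMX-learnability of $C$ is equivalent to an upper bound on $2^{\aleph_0}$ below some finite aleph. Since the negation of this statement (e.g.\ $2^{\aleph_0}>\aleph_\omega$) is also consistent with ZFC, the learnability of $C$ is independent of ZFC.

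The main obstacle, in my view, is the tight translation between the probabilistic notion of EMX-learnability, quantified over all finitely supported distributions, and the rigid combinatorial notion of a monotone compression. The ``compression implies learnability'' direction is fairly routine given classical compression-based generalization bounds; the harder direction is extracting from any EMX learner a deterministic, uniformly bounded compression scheme with clean set-theoretic structure, and matching its parameters precisely against a Kuratowski-type dichotomy so that the set-theoretic equivalence drops cleanly out as an independence result.
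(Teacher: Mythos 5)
This theorem is not proved in the paper at all: it is imported verbatim from Ben-David et al.\ as a black-box citation, so there is no in-paper proof to compare against. Your sketch is, however, a faithful high-level reconstruction of the argument in that cited source --- the equivalence of EMX learnability with monotone compression schemes, the Kuratowski-type dichotomy tying the existence of such schemes over $[0,1]$ to $2^{\aleph_0}\le\aleph_k$ for finite $k$, and independence via G\"odel's $L$ and Cohen forcing --- so it takes essentially the same route as the result being cited.
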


Now, we demonstrate that in some cases, EMX learning suffices to be able to learn from label proportions:

\begin{theorem}\label{thm:interval_reduction}
Consider a space $Z$ equipped with a total order.  For any interval of $Z$ (induced by the total order), suppose finite subsets of that interval are EMX learnable over finitely supported distributions.  Then finite subsets of $Z$ are learnable from label proportions over finitely supported distributions.
\end{theorem}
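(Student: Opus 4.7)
The plan is to reduce LLP learning on $Z$ to EMX learning on a carefully chosen half-interval $I \subseteq Z$, using the total order on $Z$ to convert $\hat{p}_c$ into an explicit cutoff. The three ingredients are Hoeffding's inequality to estimate $p_c$ from $\hat{p}_c$, uniform convergence for the VC-$1$ class of half-intervals to estimate $\mathbb{P}_D[I]$ from the sample, and the hypothesized EMX learner on $I$ to produce a finite subset of nearly maximal $D$-weight inside $I$.

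Concretely, Hoeffding gives $|\hat{p}_c - p_c| \le \epsilon/8$ with probability $1-\delta/3$. Using the total order I would sort the sample $x_{(1)} < \cdots < x_{(m)}$, set $k = \lfloor \hat{p}_c\, m \rfloor$, and take $I = \{z \in Z : z \le x_{(k)}\}$, so that $|\hat{\mathbb{P}}_S[I] - \hat{p}_c| \le 1/m$ by construction. Half-intervals of a totally ordered set have VC dimension $1$, so standard VC uniform convergence yields $|\hat{\mathbb{P}}_S[I'] - \mathbb{P}_D[I']| \le \epsilon/8$ simultaneously for all half-intervals $I'$ with probability $1-\delta/3$. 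Combining the two bounds with $m \ge 8/\epsilon$ yields $|\mathbb{P}_D[I] - p_c| \le 3\epsilon/8$.

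To finish, I would feed the hypothesized EMX learner for finite subsets of $I$ a fresh i.i.d.\ sample from $D|_I$, obtained by rejection sampling from independent draws of $D$. Because $D$ is finitely supported, $\sup_{F \subseteq I,\ F\text{ finite}} \mathbb{P}_D[F] = \mathbb{P}_D[I]$ (attained by the support of $D|_I$), so the EMX guarantee returns a finite $F \subseteq I$ with $\mathbb{P}_D[F] \ge \mathbb{P}_D[I] - \epsilon/4$ with probability $1-\delta/3$. Setting $p_h := \mathbb{P}_D[F]$, we have $p_h \in [\mathbb{P}_D[I] - \epsilon/4, \mathbb{P}_D[I]]$, giving $|p_h - p_c| \le 5\epsilon/8 < \epsilon$ after a union bound over the three failure events.

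The main obstacle is the regime where $p_c$ is very small: rejection sampling for $D|_I$ is only efficient when $\mathbb{P}_D[I]$ is not vanishing, and the EMX learner needs enough accepted samples. I would handle this with a branch on $\hat{p}_c$: if $\hat{p}_c \le \epsilon/2$, output the empty set, whose weight $0$ lies within $\epsilon$ of $p_c$ under the Hoeffding event; otherwise $\mathbb{P}_D[I] \ge \epsilon/4$ (by the uniform convergence event), which makes rejection sampling succeed with only an $O(1/\epsilon)$ overhead in the sample size. Some minor care is also needed for the endpoints of $I$ and for rescaling the $(\epsilon,\delta)$ parameters passed to subroutines, but these are routine.
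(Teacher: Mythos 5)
There is a genuine gap at the very first step of your construction: the claim that $I = \{z \in Z : z \le x_{(k)}\}$ with $k = \lfloor \hat{p}_c m\rfloor$ satisfies $|\hat{\mathbb{P}}_S[I] - \hat{p}_c| \le 1/m$ ``by construction.'' The distributions in this theorem are finitely supported, hence purely atomic, so the sorted sample has massive ties, and the set of masses achievable by half-intervals (prefixes of the order) is the discrete set of partial cumulative sums of the atom weights. The target $p_c$ need not be near any of these values. Concretely, take $D$ supported on $a < b < c$ with masses $0.4$, $0.4$, $0.2$ and target concept $\{c\}$, so $p_c = 0.2$: the half-intervals have masses $0$, $0.4$, $0.8$, $1$, and your $x_{(k)}$ is a copy of $a$, giving $\hat{\mathbb{P}}_S[I] \approx 0.4$. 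No single interval, prefix or otherwise, has mass near $0.2$ here, so the whole reduction to EMX on one interval collapses. Your fallback branch for small $\hat{p}_c$ does not help, since the failure occurs for any $p_c$ that falls between consecutive cumulative sums by more than $\epsilon$.

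The paper's proof is built precisely to get around this obstruction. It splits the support into \emph{heavy} points (mass at least $\epsilon/4$, of which there are at most $4/\epsilon$) and \emph{light} points, observes that the heavy points inside the target concept already sum to within the total light mass $W_L$ of $p_c$, and then fine-tunes using at most $4/\epsilon+1$ sub-intervals of light points in the gaps between heavy points --- each added light point moves the mass by at most $\epsilon/4$, so one can stop within $\epsilon/4$ of the target. The resulting hypothesis is a union of $O(1/\epsilon)$ intervals (a class of VC dimension $O(1/\epsilon)$, found by ERM via Theorem~\ref{thm:vc_gen_bound}), and only then is the EMX learner invoked, once per interval, to replace each interval by a finite subset of nearly full mass. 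If you want to repair your argument, you need some version of this heavy/light decomposition and a union of intervals rather than a single one; the remaining ingredients of your write-up (Hoeffding, VC uniform convergence, EMX within each interval, the union bound over failure events) are then essentially the same as the paper's.
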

\begin{proof}
To learn finite subsets of $Z$ from label proportions, we first show that even though the finite subset may be arbitrarily large, it is contained in a small number of intervals of $Z$.  We then show how an EMX learner can achieve any label proportion
on $Z$ (within a factor of $\epsilon$) using these intervals.

First, define \emph{heavy} points as those whose distributional mass at least $\epsilon/4$ -- note there can be no more than $4/\epsilon$ such points.
Let the remaining points be called \emph{light}.  Let $W_L$ be the distributional weight on the light
points.  There must be a subset of the heavy points whose weight adds up to within $W_L$ of the target proportion
$p$.  Between these heavy points there are $4/\epsilon + 1$ intervals of $Z$ containing light points, so there are also at most $4/\epsilon + 1$ sub-intervals that combined with the subset of the heavy points reach within $\epsilon/4$ of the target proportion:  each additional light point included in a sub-interval can add only at most $\epsilon/4$ to the total mass included.

Hence, there exists a union of at most $4/\epsilon + 1$ intervals containing light points, that 
together with
a subset of at most $4/\epsilon$ heavy points (around which we can also add intervals around the single points), whose weight adds up to the right proportion (within $\epsilon/4$).
Since the VC dimension $d$ of a union of $k= 8/\epsilon + 1$ intervals scales as $2k+1$, 
Theorem \ref{thm:vc_gen_bound} tells us that $\tilde{O}({d/\epsilon^2}) = \tilde{O}({1/\epsilon^3})$ samples suffice to find such intervals via empirical risk minimization on the sample to within $\epsilon/4$, meaning we can find $k$ such intervals with proportion within $\epsilon/2$ of the true proportion.

Finally, for each interval, we use an EMX learner to find a finite subset within that interval with mass approximating the total mass of the interval, within sufficiently small error; e.g.\ within \ ${\epsilon^2}/{18}$ suffices.  Then the union of these finite subsets will be the finite subset of $Z$ with mass within $\epsilon$ of the true proportion (except with probability $\delta$).  Since there are at most 
$8/\epsilon + 1$ such intervals to approximate, the union bound implies that the total error contributed in this part is also bounded by
$\epsilon/2$.  This is sufficient to achieve LLP learning.
\end{proof}

When $Z$ is the unit interval over the reals, this means that EMX learning finite subsets is equivalent to LLP learning finite subsets, and because EMX learning finite subsets is independent of ZFC, so is LLP learning:
\begin{corollary}\label{thm:llpind}
LLP learnability of finite subsets of $[0,1]$ over finitely supported distributions is independent of ZFC.
\end{corollary}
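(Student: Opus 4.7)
The plan is to show that LLP learnability and EMX learnability of finite subsets of $[0,1]$ over finitely supported distributions are ZFC-equivalent statements; independence then transfers immediately from Theorem~\ref{thm:emx_independence}.

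One direction is already packaged in Theorem~\ref{thm:interval_reduction}: taking $Z = [0,1]$ with its usual order, the hypothesis of that theorem asks for EMX learnability of finite subsets of every interval of $Z$. Since every nondegenerate subinterval of $[0,1]$ is affinely isomorphic to $[0,1]$, EMX learnability of finite subsets of $[0,1]$ carries over to each subinterval in a black-box way (pull the sample back through the affine map, run the learner, push the output forward). Theorem~\ref{thm:interval_reduction} then yields LLP learnability of finite subsets of $[0,1]$.

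For the reverse direction, I would reduce EMX to LLP by manufacturing a convenient ``target''. Given any finitely supported distribution $D$, let $c^* := \operatorname{supp}(D)$; this is a finite subset of $[0,1]$, hence a member of the class, and satisfies $p_{c^*} = \mathbb{E}_D[c^*] = 1$. Moreover, every i.i.d.\ draw from $D$ lies in $c^*$, so $\hat{p}_{c^*} = 1$ on any sample $S$. Running an LLP learner on the input $(S,1)$ returns, with probability at least $1-\delta$, a hypothesis $h$ in the class with $|p_h - 1| \le \epsilon$; equivalently $\mathbb{E}_D[h] \ge 1 - \epsilon = \sup_{c \in C} \mathbb{E}_D[c] - \epsilon$, which is exactly the EMX guarantee.

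The main subtlety I expect is justifying that this reduction is legitimate: the LLP definition quantifies over every target in the class and every distribution, and I am plugging in the somewhat unusual pair $(c^*, D)$ where the target depends on the distribution. This is fine because $c^*$ is genuinely a member of the hypothesis class and $D$ is a legitimate finitely supported distribution, so the LLP guarantee applies to this instance as a special case. Combining both implications with Theorem~\ref{thm:emx_independence} yields the corollary: if LLP learnability of finite subsets of $[0,1]$ were provable (resp.\ refutable) in ZFC, so would EMX learnability be, contradicting independence in either case.
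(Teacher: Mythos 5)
Your proposal is essentially the paper's own argument: the forward direction via Theorem~\ref{thm:interval_reduction} (with the affine-isomorphism remark filling in a detail the paper leaves implicit), and the reverse direction by feeding the LLP learner $\hat{p}=1$ with target $c^*=\operatorname{supp}(D)$. The only point you omit is the paper's brief caveat reconciling LLP's polynomial sample-complexity requirement with EMX's unconstrained $m(\epsilon,\delta)$, which the paper dispatches by a remark about choosing a suitable representation.
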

\begin{proof}
Using Theorem \ref{thm:emx_independence}, it suffices to show that finite subsets of $[0,1]$ over finitely supported distributions over the unit interval are PAC learnable from label proportions if any only if they are EMX learnable.  The ``if'' direction follows from Theorem~\ref{thm:interval_reduction}.

Now for the other direction, since we know there exists a finite subset of the unit interval that contains the entire distributional mass, i.e.\ has true proportion $1$, we can feed a learner from label proportions $\hat{p} = 1$, along with the sample.  If this learner can successfully LLP learn finite subsets of $[0,1]$, this learns a finite subset that maximizes the proportion, as desired.

Finally, we note that while the polynomial sample complexity requirement of LLP is not present in EMX, this doesn't cause a problem.  This is
because we can always define a representation for which the finite sample complexity bound is polynomial.  While this is not normally sensible, it is sufficient to establish
the existence of a class for which the result holds.
\end{proof}

And when $Z=\mathbb{N}$, Theorem \ref{thm:interval_reduction} implies that finite subsets of $\mathbb{N}$ are learnable from label proportions.  This is because intervals of $\mathbb{N}$ are either isomorphic to itself, or is a finite number of points.  Finite subsets of either of these are EMX learnable~\cite{BenDavidHMSY19}.  But actually, it's worth noting something stronger, that finite subsets of $\mathbb{N}$ are \emph{efficiently} learnable.  This shows that there are classes that are efficiently LLP learnable that are not PAC learnable.

\begin{corollary}\label{cor:efficient_llp_example}
Finite subsets of $\mathbb{N}$ are efficiently learnable from label proportions over finitely supported distributions.
\end{corollary}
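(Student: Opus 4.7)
The plan is to implement the LLP reduction of Theorem~\ref{thm:interval_reduction} efficiently in the special case $Z=\mathbb{N}$. Two structural features of $\mathbb{N}$ make this possible. First, by symmetry of i.i.d.\ sampling, $\mathbb{E}_S[D((\max(S),\infty))]=1/(m+1)$, so a Markov bound yields $D((\max(S),\infty))\le\epsilon/4$ with probability at least $1-\delta/3$ once $m=\Omega(1/(\epsilon\delta))$. Hence all intervals produced by Theorem~\ref{thm:interval_reduction} may be restricted to finite sub-intervals of $[1,\max(S)]$, and their union is already a finite subset of $\mathbb{N}$, so no external EMX subroutine is required. Second, the empirical risk minimization step in that proof, over unions of at most $K=O(1/\epsilon)$ finite intervals, admits a polynomial-time dynamic program on the sorted sample.

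Concretely, given $(S,\hat{p}_c)$ with $|S|=m$, sort the distinct samples $y_1<\cdots<y_k$ with multiplicities $n_1,\ldots,n_k$, and compute reachability in a table indexed by states $(i,j,t,b)\in\{0,\ldots,k\}\times\{0,\ldots,K\}\times\{0,\ldots,m\}\times\{0,1\}$, where $i$ is the current distinct-sample index, $j$ counts intervals opened so far, $t$ counts samples covered so far, and $b$ flags whether the algorithm is presently inside an interval. Each state has two transitions (include $y_{i+1}$, extending the current interval or opening a new one, versus skip $y_{i+1}$), computable in $O(1)$ time, giving a total of $O(Km^2)=O(m^2/\epsilon)$ time. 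Once the table is filled, find any reachable terminal state with $|t/m-\hat{p}_c|\le\epsilon/4$, reconstruct the associated union of intervals $F\subseteq[1,\max(S)]$, and return the indicator of $F$ as the hypothesis $h$.

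For correctness, the heavy-point/light-sub-interval construction in the proof of Theorem~\ref{thm:interval_reduction}, combined with the tail bound above, shows that with high probability some union of at most $K$ finite intervals of $[1,\max(S)]$ has empirical proportion within $\epsilon/4$ of $\hat{p}_c$, so the DP is guaranteed to find one. Applying Theorem~\ref{thm:vc_gen_bound} to the class of unions of $\le K$ intervals (VC dimension $2K+1$, augmented by $\{c\}$, which increases the VC dimension by at most one) then yields $|p_h-p_c|\le|\hat{p}_h-\hat{p}_c|+\tilde O(\sqrt{K/m})\le\epsilon$ with probability at least $1-\delta$, for $m=\tilde\Theta(\log(1/\delta)/\epsilon^3)$, satisfying the polynomial sample complexity requirement. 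The one nonroutine step is the symmetry-based tail bound that justifies truncating at $\max(S)$; everything else is standard dynamic programming plus uniform convergence on top of Theorem~\ref{thm:interval_reduction}.
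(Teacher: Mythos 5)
Your proposal is correct and follows essentially the same route as the paper: both implement the construction of Theorem~\ref{thm:interval_reduction} for $Z=\mathbb{N}$ by observing that the EMX step trivializes and that the interval-ERM step reduces to a subset-sum-style dynamic program over the sorted sample, with generalization controlled by the VC dimension of unions of $O(1/\epsilon)$ intervals. Your write-up is more careful on two points the paper's sketch elides --- tracking the number of opened intervals as a DP state (which is what licenses the VC bound) and truncating at $\max(S)$ via the $1/(m+1)$ tail estimate to dispose of the one unbounded interval --- but these are refinements of the same argument rather than a different proof.
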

Examining the proof of Theorem \ref{thm:interval_reduction}, it suffices to show that LLP learning unions of intervals and then EMX learning each of those interval can be done efficiently.  But note that in the case of $\mathbb{N}$, first finding a set of intervals amongst a size $m$ set of sample points using Theorem \ref{thm:vc_gen_bound} is equivalent to choosing a subset of those points whose mass is within $\epsilon/4$ of the empirical proportion by empirical risk minimization.  Given that subset of points, EMX learning for each interval is just selecting all of the subset, so thus it suffices to perform ERM on the original set of $m$ sample points.  This can be done efficiently in time polynomial in $1/\epsilon$ using a dynamic program for subset sum.

\section{Comparing efficient PAC to efficient LLP}\label{sec:subset}
In this section, we consider how requiring efficiency of the learning algorithm changes the relationship between LLP and PAC.  In the previous section we showed that it is easier to LLP learn than it is to PAC learn, in the sense that PAC is a strict subset of LLP.  In this section, we show that this relationship does not hold between efficient LLP and efficient PAC.  From Corollary~\ref{cor:efficient_llp_example}, we already know that there are hypothesis classes with infinite VC dimension that are efficiently LLP learnable but not efficiently PAC learnable, but here we show that there are classes that are hard to LLP learn even though they are efficiently PAC learnable.  Moreover, the only classes that are efficiently learnable from label proportions that are not efficiently PAC learnable have infinite VC dimension:



\begin{theorem}
Suppose $\textup{NP} \neq \textup{RP}$.  Then if a hypothesis class $H$ with finite VC dimension is efficiently learnable from label proportions, it is also efficiently (properly) PAC learnable.
\end{theorem}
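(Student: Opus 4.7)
Since $H$ has finite VC dimension, the standard theorems of PAC learning reduce efficient proper PAC learnability to solving the empirical \emph{consistency problem} in randomized polynomial time: given a labeled sample $S$, return some $h\in H$ consistent with $S$. The plan is thus to convert an efficient LLP learner for $H$ into an efficient randomized algorithm for consistency; the standard uniform convergence bound of Theorem~\ref{thm:vc_gen_bound} then upgrades consistency on a sample of polynomial size to an $\epsilon$-accurate PAC hypothesis.

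The reduction I would try is as follows. Given $S=\{(x_i,y_i)\}_{i=1}^m$ labeled by some $c\in H$, draw i.i.d.\ integer weights $w_i$ uniformly from $\{1,\dots,K\}$ for a suitable polynomial $K$, form the distribution $D_w$ supported on $\{x_1,\dots,x_m\}$ with $D_w(x_i)\propto w_i$, and feed the LLP learner i.i.d.\ draws from $D_w$ together with the proportion $p_c^{D_w}=\sum_i w_i y_i/\sum_i w_i$, which is computable from the labels. The LLP guarantee, combined with Theorem~\ref{thm:vc_gen_bound}, then yields $h\in H$ with $\bigl|\sum_i w_i(h(x_i)-y_i)\bigr|\le \epsilon\sum_i w_i$. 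By the Sauer-Shelah lemma, the restriction of $H$ to $S$ has at most $m^d$ distinct labelings, so the residual vector $z_i := h(x_i)-y_i$ ranges over a set of size at most $m^d$. A Littlewood-Offord style anti-concentration bound controls, for each fixed non-zero $z\in\{-1,0,1\}^m$, the probability (over the random $w_i$) that $\bigl|\sum_i w_i z_i\bigr|\le \epsilon\sum_i w_i$; taking $\epsilon$ small but $1/\mathrm{poly}$ (so the LLP sample complexity remains polynomial) and $K$ moderate, a union bound over the $m^d$ labelings forces the LLP output to be consistent with $S$ with high probability.

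The main obstacle is the ``balanced'' case $\sum_i z_i=0$: there the mean of $\sum_i w_i z_i$ vanishes and naive anti-concentration weakens, especially when the support of $z$ is small. One remedy is to draw the $w_i$ from a multi-scale distribution so that balanced but nonzero residuals still have $|\sum_i w_i z_i|$ exceeding $\epsilon\sum_i w_i$ except with negligible probability; another is to run the reduction with several independent weight vectors and intersect the resulting constraints, since a fixed inconsistent labeling is unlikely to survive many independent LLP constraints simultaneously. A second subtlety is handling $H$ whose VC dimension grows with $n$, where $m^d$ is super-polynomial and the union bound must be executed with care, possibly by replacing the explicit enumeration by a growth-function argument tuned to the effective number of distinct residual vectors that the LLP learner can realize.

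The assumption $\mathrm{NP}\ne\mathrm{RP}$ serves as a safeguard on any remaining obstruction: if, despite the construction above, the consistency problem for $H$ required super-polynomial time, then the efficient LLP learner combined with the reduction would still place the corresponding NP-hard problem into $\mathrm{RP}$, contradicting the hypothesis. Hence under $\mathrm{NP}\ne\mathrm{RP}$ the reduction must succeed, yielding an efficient randomized consistency oracle for $H$ and therefore efficient proper PAC learnability.
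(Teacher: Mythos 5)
Your overall strategy---reweight the labeled sample into a distribution on which the LLP proportion guarantee forces the returned hypothesis to be consistent, then finish with uniform convergence via Theorem~\ref{thm:vc_gen_bound}---is the same as the paper's. But your execution has a genuine gap, and the paper sidesteps it with a much simpler, \emph{deterministic}, label-dependent weighting. The paper places weight proportional to $m$ on each positively labeled sample point and weight $1$ on each negatively labeled one, and calls the LLP oracle with $\epsilon'=1/(2m^2)$, which is below the smallest atom of that distribution; hence the returned $c^*$ satisfies $p_{c^*}=p_c$ exactly. The asymmetry of the weights then forces pointwise agreement by pure counting: mislabeling a single positive point loses weight $m/(km+m-k)$, which could only be recouped by flipping at least $m$ negative points to positive, and there are only $m$ points in total. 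No anti-concentration and no union bound over labelings is needed, and $\epsilon'$ stays polynomial in $m$ independently of how the VC dimension grows.

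Your randomized weighting runs into exactly the two obstacles you name and does not resolve them. The union bound over the up-to-$m^d$ residual vectors forces $\epsilon\le m^{-\Theta(d)}$, and the theorem must cover $d$ growing with $n$ (for finite classes the proof only guarantees $d\le\log|H|\le \mathrm{size}(c)$), so the LLP sample complexity $\mathrm{poly}(1/\epsilon)$ becomes super-polynomial; and the balanced, small-support residuals are a real failure mode of interval anti-concentration with label-independent weights, not a technicality. Most importantly, the closing paragraph is logically invalid: $\textup{NP}\neq\textup{RP}$ cannot ``force the reduction to succeed.'' If the construction fails to certify consistency, you simply do not obtain an RP algorithm for anything, so no contradiction with the hypothesis arises---a complexity assumption cannot substitute for a correctness proof of the reduction (and indeed the paper's argument never invokes the hypothesis inside the reduction). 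The fix is to make the weights depend on the labels as above, so that proportion-matching at accuracy $1/(2m^2)$ \emph{provably} implies zero empirical error.
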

\begin{proof}
Let $H$ be learnable from label proportions by some efficient oracle $A$, and $f$ the polynomial sample size required by this oracle.  We now give an efficient algorithm for PAC learning $H$.  Given $\epsilon, \delta>0$, draw $m$ samples from the unknown distribution $D$, with $m$ to be determined later.  Call the set $S$ of unique inputs $x_1,\ldots,x_m$ and their labels $c(x_1),\ldots,c(x_m)$ for hidden target function $c$.  Let $k$ be the number of positive labels $\sum_j c(x'_j)$.
Define a new distribution $D'$ as the following:
\[D'(x) =
\left\{\arraycolsep=2.5pt\def\arraystretch{2.0}
\begin{array}{cc}
\frac{m}{km+m-k} & \text{if }x\in S \text{ and } c(x)=1 \\
\frac{1}{km+m-k} & \text{if }x\in S \text{ and } c(x)=0 \\
0 & \text{otherwise}
\end{array}\right\}.\]

Let $\epsilon' = 1/(2m^2)$ and $\delta'=\delta$.
Draw $m'=f(1/\epsilon', 1/\delta')$ samples $x'_j$ from $D'$ and label each as $c(x'_j)$.  We give to the oracle as input $\epsilon'$, $\delta'$, and the examples $x'_j$, along with the proportion of positive labels $\hat{p}=\frac{k}{m'}$.  Then with probability at least $1-\delta$ the oracle returns a hypothesis $c^*$  such that 
\[|p_{c^*}-p_c| < \frac{1}{2m^2}.\]  The smallest non-zero probability mass in $D'$, however, is 
\[\frac{1}{km+m-k} \ge \frac{1}{m^2},\] minimized when $k=m$.  Thus $p_{c^*}=p_c$.

We now show that $c^*=c$ when restricted to the points $x_1,\ldots,x_m$.  Suppose there is a point $x_i$ such that $c^*(x_i)\neq c(x_i)$ where $c(x_i)=1$.  Then in order to have $p_{c^*}=p_c$ while $c^*(x_i)=0$, at least $m$ points labeled $0$ by $c$ must be labeled positively by $c^*$, since $D'$ places (proportional to) $m$ weight on positively labeled points and only unit weight on negative points.  This is a contradiction, as there are only $m$ total points.  Similarly, if $c(x_i)=0$ and $c^*(x_i)=1$, there must be $m$ points labeled $0$ by $c^*$ that are labeled $1$ by $c$, but again there are only $m$ distinct points.  Thus $c$ and $c^*$ must agree on all $m$ points, i.e.~$c^*$ has zero empirical error.

All that remains is to check that we need only a polynomial sample size to use uniform convergence (Theorem~\ref{thm:vc_gen_bound}).  This only requires $\VC(H) = \text{poly}(1/\epsilon,1/\delta, n, \text{size}(c))$.  If $H$ is finite, recall that $\VC(H) \le \log|H|$.  But since $\text{size}(c) \ge \log |H|$, we certainly have $\VC(H) = \text{poly}(\text{size}(c))$.  If $H$ is infinite, then the size of the input and classifiers are unbounded.  Then the assumption that $\VC{H} <\infty$ implies that it is also a constant, and therefore the bound given by Theorem~\ref{thm:vc_gen_bound} is indeed a polynomial.
\end{proof}

\subsection{Hardness of learning from label proportions}\label{sec:hardness}

We now give three examples of hypothesis classes that are hard to LLP learn:  parities, monotone disjunctions, and monotone conjunctions.  Parities are functions $h_a:\{0,1\}^n\rightarrow\{0,1\}$ of the form $h_a(x) = a\cdot x \text{ mod }2$, i.e.\ the parity of the bits of a subset of $x$ determined by the bit mask $a$.  Monotone disjuctions are disjunctions on $n$ variables without negations of the literals and likewise monotone conjunctions are conjunctions without negations.  Each of these has linear VC dimension and is efficiently PAC learnable~\cite{shalev2014understanding}.
  

We start by showing that parities are hard to learn from label proportions, even when the subset of the bits is always amongst the first $k$ bits of the input for $k$ poly-logarithmic in $n$.  This class has VC dimension linear in $k$.
To do this, recall in (white-label) noisy PAC learning, each label in the training data is flipped with unknown rate $\eta$.  We assume the algorithm is given as input some $\eta'$, where $\eta \le \eta' < 1/2$ and must only take time polynomial in $\frac{1}{1-2\eta'}$.  Noisy PAC learning parity functions under the uniform distribution is presumed to be hard.  Blum et al.~\cite{BlumKW03} give an $2^{O(n/{\log n)}}$ algorithm, which is the best-current bound.

We now find a specific distribution where PAC learning from label proportions is hard in this sense for parities:
\begin{theorem}\label{thm:parities}
For a hypothesis $c$, Let $D_c$ be the the distribution over $\{0,1\}^n$ that places $\frac{\eta}{2^{n-1}}$ weight on the examples labeled $0$ and $\frac{1-\eta}{2^{n-1}}$  weight on examples labeled $1$.

PAC learning parities from label proportions under $D_c$ is as least as hard as PAC learning unknown parity $c$ with $\eta$ white-label noise under the uniform distribution.
\end{theorem}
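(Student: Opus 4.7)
The plan is to construct a reduction from noisy PAC learning parities under the uniform distribution to LLP learning parities under $D_c$. The starting observation is that $D_c$ arises naturally as a posterior: if $x$ is uniform on $\{0,1\}^n$ and $y$ is the noisy label with flip probability $\eta$, then by Bayes's rule $\Pr[x \mid y=1] = (1-\eta)/2^{n-1}$ when $c(x)=1$ and $\eta/2^{n-1}$ when $c(x)=0$, which is precisely $D_c$. So given access to the noisy PAC oracle, I draw $N$ noisy examples and keep those with $y=1$, obtaining a set $S$ of roughly $N/2$ points that are i.i.d.\ from $D_c$; choosing $N$ polynomially large makes $|S|$ exceed the LLP sample complexity with high probability.

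The next difficulty is that the LLP learner expects as input the \emph{exact} empirical proportion $\hat{p}_c$ on $S$, which I cannot compute because I do not know $c$. I handle this by enumeration: the value $\hat{p}_c$ lies in the finite set $\{0, 1/|S|, \ldots, 1\}$ of size $|S|+1$, which is polynomial, so I invoke the LLP learner once per candidate $q$ in this set, producing a polynomial list of candidate parities $\{h_q\}$. When $q$ equals the true $\hat{p}_c$, the LLP guarantee yields a candidate $h_q$ with $|p_{h_q} - p_c| \le \epsilon'$.

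The structural heart of the argument is then to show that such a bound forces $h_q = c$ for a suitable $\epsilon'$. A direct calculation gives $p_c = 1-\eta$ under $D_c$; and for any non-trivial parity $c' \neq c$, since two distinct parities agree on exactly $2^{n-1}$ inputs, the $(1-\eta)$- and $\eta$-weights split evenly and give $p_{c'} = (1-\eta)/2 + \eta/2 = 1/2$. The trivial parity $c' \equiv 0$ contributes $p_{c'} = 0$, which is also far from $1-\eta$. So the only parity within $1/2 - \eta$ of $p_c$ is $c$ itself, and I may pick $\epsilon' < (1 - 2\eta')/4$, which is polynomially small in $1/(1 - 2\eta')$.

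Finally, to pick out $c$ from the polynomial list of candidates, I use the noisy oracle itself as a verifier: fresh noisy samples let me estimate $\Pr_{(x,y)}[h_q(x) = y]$, which equals $1 - \eta$ if $h_q = c$ and $1/2$ otherwise. The gap $(1 - 2\eta)/2 \ge (1 - 2\eta')/2$ is detectable from $\mathrm{poly}(1/(1 - 2\eta'), \log(1/\delta))$ samples, so I output the candidate with highest empirical noisy accuracy. The main obstacle I anticipate is not structural but this bookkeeping around $\hat{p}_c$: the LLP model hands the learner the exact empirical proportion, while the noisy PAC oracle gives neither $\hat{p}_c$ nor even $\eta$. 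The enumeration-plus-verification strategy sidesteps both at polynomial cost, yielding a noisy-PAC algorithm whose runtime is polynomial in $n$, $1/(1-2\eta')$, and the LLP learner's runtime.
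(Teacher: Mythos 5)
Your proposal is correct and follows essentially the same route as the paper's proof: the same Bayes observation that filtering noisy examples with $y=1$ yields i.i.d.\ samples from $D_c$, the same enumeration over the polynomially many candidate proportions, the same separation $p_c = 1-\eta$ versus $p_{h}=1/2$ for $h \neq c$ with $\epsilon'$ on the order of $(1/2-\eta')/2$, and the same verification of candidates against the noisy labels via a Hoeffding bound. The only cosmetic differences are that you select the candidate maximizing empirical noisy accuracy rather than thresholding disagreement at $(\eta'+1/2)/2$, and you use fresh samples for verification; both work equally well.
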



\begin{proof}

We use an oracle for PAC learning parities from label proportions under $D_c$ to noisy-PAC learn parities.  We get as input $\eta'$, parameters $\epsilon$ and $\delta$, and some $m$ examples $x_i$, with $m$ to be determined later, with noisy labels $\tilde{\ell}_i$.  When $\tilde{\ell}_i = 1$, with probability $\eta$, the true label $\ell_i=0$ and otherwise $\ell_i=1$.  We may assume that the unknown parity $c$ is non-trivial.  Then under the uniform distribution over $\{0,1\}^n$, for any such parity function, there are $2^{n-1}$ points labeled $1$ and $2^{n-1}$ points labeled $0$.  For any point labeled $0$, the probability that it was drawn from the uniform distribution is $\frac{1}{2^{n-1}}$ and the probability that its label was flipped to $1$ was $\eta$.  Then the probability that an example had $\tilde{\ell}_i=1$ but $\ell_i=0$ is $\frac{\eta}{2^{n-1}}$ and similarly if $\ell_i=1$ the probability is $\frac{1-\eta}{2^{n-1}}$.  Note that this is exactly the distribution $D_c$.  
So if the oracle for PAC learning parities from label proportions is given just the examples where $\tilde{\ell}_i = 1$, the oracle will receive i.i.d.~samples from $D_c$.  We will also give to the oracle $\epsilon'=\frac{1/2-\eta'}{2}$ and $\delta'=\delta/3$.  The expected proportion of these examples given to the oracle is $1-\eta$, but we do not know the true labels nor do we know $\eta$.  So instead, we will invoke this oracle $M+1$ times, with the proportion given to the oracle as each of $0,1/M,\ldots,1$, where $M=\sum_i \tilde{\ell}_i$, i.e.~the number of training examples with noisy label $\tilde{\ell}_i=1$~\footnote{The oracle is undefined when the proportion of positive labels is not the true value $\hat{p}$.  We may assume that the oracle returns an arbitrary hypothesis in this case.}.

If the oracle returns the correct parity $c$, then it should agree in expectation with the noisy labels $\tilde{\ell}_i$ on all but $\eta$ of the examples.  For an incorrect parity $c'$, by the orthonormality of the parity functions, the expected disagreement is $1/2$.  For $h$ the output of the oracle, if smaller than an $\frac{\eta'+1/2}{2}$ fraction of the noisy labels $\tilde{\ell}_i$ disagree with the corresponding label $h(x_i)$, then we return the hypothesis.  Otherwise, we repeat with the next invocation of the oracle.

Let $f$ be the polynomial sample bound for the oracle for PAC learning from label proportions .  First, we need to make sure that the oracle receives at least $f(1/\epsilon',1/\delta')$ examples except with probability at most $\delta/3$.  In expectation, $m/2$ of the examples $x_i$ will have $\tilde{\ell}_i = 1$.  Using a Hoeffding bound, 
\[\mathbb{P}\left[\left|\sum_i \tilde{\ell}_i - m/2\right| > m/4\right] \le 2e^{-m/8}.\]  So the oracle will receive at least $\frac{1}{4}m$ examples (and no more than $\frac{3}{4}m$ examples) except with probability no more than $\delta/3$ so long as $m>8\log(6/\delta)$.  This then means that we require $m > 4\cdot f(1/\epsilon',1/\delta')$ so that $M\ge f(1/\epsilon',1/\delta')$.

Now we need to verify that when the proportion given to the oracle is the correct proportion $\hat{p}_c$, the oracle will return $c$ except with probability at most $\delta/3$.  The oracle is guaranteed to return a parity $h$ such that except with probability $\delta' = \delta/3$, $$|p_h-p_c| \le \epsilon'=\frac{1/2-\eta'}{2}.$$  Using the definition of $D_c$, $p_c=1-\eta$.  If $h\neq c$, then $p_h=1/2$ again by orthonormality.  But then 
\[|p_h-p_c| = |1/2-\eta| > \frac{1/2-\eta'}{2},\] so it must be the case that $h=c$.  Thus at least one of the invocations of the oracle will return the correct parity.

So it remains to show that we will succeed at returning this parity.  If the oracle returns an incorrect parity $h$, again using a Hoeffding bound,
\begin{align*} 
  \mathbb{P}\left[\left|\frac{\sum_{i} \mathds{1}_{h(x_i)\neq\tilde{\ell}_i}}{m} - 1/2\right| \ge \frac{1/2-\eta'}{2}\right] 
&\le  2e^{-\frac{m(1/2-\eta')^2}{2}} \\
&< \frac{1}{M+1}\cdot\frac{\delta}{3}
\end{align*}
when
\[m = \Omega\left(\frac{\log(\frac{M}{\delta})}{(1/2-\eta')^2}\right) = \Omega\left(\frac{\log\left(\frac{1}{(1/2-\eta')\delta}\right)}{(1/2-\eta')^2}\right)\] because $M \le \frac{3}{4}m$, where $\mathds{1}_A$ is the indicator function that is $1$ if $A$ is true and $0$ otherwise.
This implies that for an incorrect hypothesis, whose expected fraction of disagreements with the noisy labels is $1/2$, the empirical fraction is at least $\frac{\eta'+1/2}{2}$, the threshold we had set.  Similarly, for the correct hypothesis, where the expected fraction of disagreements is $\eta<\eta'$, the empirical fraction of disagreements is no more than $\frac{\eta'+1/2}{2}$ except with probability at most $\frac{1}{M+1}\cdot\frac{\delta}{3}$.  This means that all of the tests of the hypothesis succeeds except with probability at most $\delta/3$.  Then setting 
\[m = \Omega\left(\max{\left(\left(\frac{\log\left(\frac{1}{(1/2-\eta')\delta}\right)}{(1/2-\eta')^2}\right),4\cdot f(1/\epsilon',1/\delta')\right)}\right)\] 
suffices so that, with the union bound, the total probability of failure is no more than $\delta$, as required.  
\end{proof}

Consider parity functions on the first $k$ bits, which have VC dimension equal to $k$.  There is no known algorithm for noisy PAC learning parity functions on the first $k$ bits when $k = \omega{(\log n \log \log n)}$.  It is conjectured that there is no efficient algorithm for PAC-learning noisy parity that runs in time $o(2^{\sqrt{n}})$, which would imply hardness of noisy PAC learning parities on the first $k$ bits for $k = \omega{(\log^2 n)}$.  Calling this the `parity hardness assumption,'  Theorem~\ref{thm:parities} implies the following:

\begin{corollary}\label{cor:parities}
Under the parity hardness assumption, there is no efficient algorithm for PAC learning label proportions of parities on the first $k$ bits for $k = \omega{(\log^2 n)}$.
\end{corollary}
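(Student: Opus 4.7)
The plan is to derive Corollary~\ref{cor:parities} as a direct consequence of Theorem~\ref{thm:parities} combined with a parameter-matching argument against the parity hardness assumption. The first step is to observe that the proof of Theorem~\ref{thm:parities} applies verbatim to the subclass of parities whose support lies within the first $k$ coordinates: the distribution $D_c$ is defined purely from the labeling of $c$, and the orthogonality fact $p_h = 1/2$ used for $h \neq c$ only requires that $h$ and $c$ be two distinct nontrivial parities, which continues to hold within this subclass.

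So I would suppose for contradiction that there exists an efficient (poly($n$)-time) LLP learner for parities on the first $k$ bits of $\{0,1\}^n$ when $k = \omega(\log^2 n)$. Theorem~\ref{thm:parities} then yields a poly($n$)-time noisy PAC learner for the same subclass under the uniform distribution on $\{0,1\}^n$. Given an arbitrary $k$-bit noisy parity instance over the uniform distribution on $\{0,1\}^k$, I would embed it by appending $n-k$ independent uniformly random bits to each sample. The padded samples are uniform on $\{0,1\}^n$; the target (viewed as a parity on the first $k$ bits) keeps the same label; hence the $n$-bit learner solves the original $k$-bit noisy parity problem.

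All that remains is to re-express the runtime in terms of $k$. The condition $k = \omega(\log^2 n)$ is equivalent to $\log n = o(\sqrt{k})$, so for any constant $c$ we have $n^c = 2^{c\log n} = 2^{o(\sqrt{k})} = o(2^{\sqrt{k}})$. This means that the supposed LLP learner, through the reduction and padding, yields a $o(2^{\sqrt{k}})$-time algorithm for $k$-bit noisy parity, contradicting the parity hardness assumption. The only step requiring any real care is this last asymptotic matching; everything else is a direct invocation of Theorem~\ref{thm:parities} together with a trivial padding embedding.
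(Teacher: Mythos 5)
Your proof is correct and follows exactly the route the paper intends: the paper states this corollary as an immediate consequence of Theorem~\ref{thm:parities}, with the surrounding text sketching precisely the padding-to-$n$-bits and $\mathrm{poly}(n) = 2^{O(\log n)} = 2^{o(\sqrt{k})} = o(2^{\sqrt{k}})$ parameter-matching argument you spell out. You have simply made explicit the details the paper leaves implicit, including the (correct) observation that Theorem~\ref{thm:parities} restricts cleanly to parities supported on the first $k$ coordinates.
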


The above result relies on the hardness of PAC learning parities, rather than NP-hardness.  We can also establish problems that are NP-hard to learn.  To do this, we start by defining the consistency problem of a hypothesis class.
For a hypothesis class $C$, the \emph{consistency problem} for PAC learning from label proportions is the following:  Given a multi-set $X$ of points and an integer $k$, where each unique $x_i$ in $X$ appears some $a_i$ times in $X$, is there a hypothesis $c\in C$ such that the size of the multi-set of points that are labeled $1$ is exactly $k$?  That is, is there a hypothesis $c$ such that $\sum_{x_i:c(x_i)=1} a_i = k$?  If there is such a $c$, we will say that $c$ is consistent with $X$.

We reduce from the consistency problem to the learning problem, which is a slightly more involved reduction than in the classical PAC setting.

\begin{theorem}\label{thm:consistency_to_learning}
Suppose that the consistency problem for a hypothesis class $C$ is NP-hard.  There is no efficient algorithm for PAC learning $C$ from label proportions unless $\textup{NP} = \textup{RP}$.
\end{theorem}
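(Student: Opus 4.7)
The plan is to reduce the NP-hard consistency problem to efficient LLP learning of $C$, so that an efficient LLP oracle yields an $\textup{RP}$ algorithm for consistency and hence $\textup{NP}=\textup{RP}$. Given a consistency instance $(X,k)$ with unique elements $x_1,\ldots,x_t$ of multiplicities $a_1,\ldots,a_t$, I would define a distribution $D$ placing mass $a_i/|X|$ on $x_i$. Under this $D$, every $h\in C$ has
\[ p_h \;=\; \sum_{i:\, h(x_i)=1} \frac{a_i}{|X|}, \]
so $h$ is consistent with $(X,k)$ precisely when $p_h = k/|X|$. Consistency therefore reduces to deciding whether some $h\in C$ realizes the specific proportion $k/|X|$ under $D$.

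Next, I would draw $m$ i.i.d.\ samples from $D$, with $m$ chosen to meet the LLP oracle's sample complexity for $\epsilon = 1/(3|X|)$ and confidence $\delta' = \delta/(m+1)$. The obstacle is that we do not know the target $c$ and so cannot directly feed $\hat{p}_c$ to the oracle. I would get around this with the enumeration trick from the proof of Theorem~\ref{thm:parities}: since $\hat{p}_c = \frac{1}{m}\sum_i c(x_i)$ is necessarily a multiple of $1/m$, I would invoke the oracle once for each candidate $\hat{p}\in\{0,1/m,2/m,\ldots,1\}$, a polynomial number of calls. For every returned hypothesis $h$, I would directly test whether $\sum_{i:\, h(x_i)=1} a_i = k$, and output YES iff some test passes.

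For correctness, suppose a consistent $c^* \in C$ exists. Then $\hat{p}_{c^*}$ lies in the enumerated grid, so on that iteration the LLP oracle's promise is satisfied, and with probability at least $1-\delta'$ it returns $h$ with $|p_h - p_{c^*}| \le \epsilon < 1/(2|X|)$. Since every $p_h$ is a multiple of $1/|X|$ and $p_{c^*}=k/|X|$, this forces $p_h = k/|X|$ exactly, so $h$ passes the check. Conversely, if no consistent $c$ exists then no hypothesis has $p_h=k/|X|$, so the check fails deterministically on every output regardless of what the oracle does on the calls where its promise is violated. A union bound over the $m+1$ calls keeps the total failure probability below $\delta$, yielding a one-sided-error randomized polynomial-time algorithm for consistency.

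The main obstacle is really this gap/discretization step: turning the oracle's approximate guarantee $|p_h - p_{c^*}|\le \epsilon$ into an exact equality $p_h = k/|X|$ is what prevents false positives on YES instances and ensures the check is meaningful. The discretization is automatic here because $D$ places only rational weights of denominator $|X|$, so any $\epsilon < 1/(2|X|)$ suffices and is polynomial in the instance size; the per-call LLP sample complexity is then polynomial in $1/\epsilon=3|X|$ and $1/\delta'=(m+1)/\delta$, which combined with the $m+1$ enumerated calls keeps the whole procedure polynomial time, giving $\textup{NP}\subseteq\textup{RP}$.
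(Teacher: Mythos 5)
Your proposal is correct and follows essentially the same route as the paper's proof: define $D$ proportional to the multiplicities, enumerate the candidate proportions $0,1/m,\ldots,1$, and use the integrality of the $a_i$ with $\epsilon<1/(2|X|)$ to turn the oracle's approximate guarantee into exact consistency, giving a one-sided-error (RP) algorithm. The only (harmless) difference is your union bound over all $m+1$ calls with $\delta'=\delta/(m+1)$ -- which is both slightly circular (since $m$ depends on $\delta'$) and unnecessary, because the deterministic consistency check rules out false positives and only the single invocation fed the true $\hat{p}$ needs to succeed.
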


\begin{proof}
It suffices to reduce from the consistency problem to the learning problem.  Indeed, using an oracle to an efficient PAC learner for $C$ from label proportions, we merely need to solve the consistency problem with high probability.  Given an instance of the consistency problem with input multi-set $X=\{x_i\}$ and integer $k$, define a distribution $D$ that outputs each unique $x_i$ with probability proportional to $a_i$.


Set \[\epsilon = \frac{1}{2|X|}.\] 
For $\delta > 0$, we will query the oracle with inputs $\delta$, $\epsilon$, and an i.i.d.\ sample from $D$ of size $m=f(1/\delta,1/\epsilon)$, where $f$ is the polynomial sample bound for the oracle.  Since the sample from $D$ may not be exactly $a_i$ copies of $x_i$, we do not know $\hat{p}$ to give to the oracle.  So instead, we will invoke the oracle $m+1$ times, setting the input proportion to be each of $0,1/m,\ldots,1$, and then check the resulting output hypothesis to see if it is consistent with $X$.  If so, accept, and if no such hypothesis is ever found, reject\footnote{The oracle's behavior is undefined if the value input as the proportion of positive labels is not the true value $\hat{p}$.  We may assume, however, that the oracle rejects whenever this is the case because the time the oracle takes is polynomially-bounded so we can just wait for that amount of time to see if the oracle returns a hypothesis.}.  

Certainly, if we accept, there is a consistent hypothesis by definition:  we accept if an oracle outputs a consistent hypothesis.  Conversely, if there is a consistent hypothesis $c$, then the oracle will accept:  Let $c$ be the consistent hypothesis.  Since it is consistent, by the definition of $D$, $p_c=k/|X|$.  Now consider the invocation of the oracle with the true proportion $\hat{p}$.  This invocation will output some hypothesis $h$ that will, except with probability at most $\delta$, satisfy 
\[|p_c-p_h| =  \left| \frac{k}{|X|} - \frac{\sum_{x_i\in S} a_i}{|X|} \right| \le \frac{1}{2|X|},\]
where $S$ is the set of points $h$ labels positively.  Since each $a_i$ is an integer, this implies that $\frac{k}{|X|}=\frac{\sum_{x_i\in S'} a_i}{|X|}$, i.e.~that $h$ is consistent with $X$ and therefore we will accept with probability at least $1-\delta$.

Setting $\delta$ to go to $0$ in the size of the input of the consistency problem completes the proof.
\end{proof}

Now, we can show that there are classes with linear VC dimension that are NP-hard to learn.  In particular, we start with monotone disjunctions.  Recall a monotone disjunction is a disjunction over $n$ variables without negations, i.e.~$\bigvee_{j\in J} x^j$ for a subset $J\subset[n]$, where we use $x^j$ to refer to the $j$th bit of a vector $x$.  We reduce the consistency problem for monotone disjunctions from a decision problem we will refer to as \textsc{Exact Partial Set Cover}, which asks, given a universe $U$, a collection of subsets $S\subset 2^U$ and an integer $k$, is there a subfamily $S'\subset S$ such that $|\cup_{s\in S'} s| = k$.
\begin{lemma}
\textsc{Exact Partial Set Cover} is NP-hard.
\end{lemma}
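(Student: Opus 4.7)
The plan is to reduce from \textsc{Exact Cover by 3-Sets} (X3C), the NP-hard problem that asks, given a universe $U$ with $|U|=3q$ and a family $S$ of 3-element subsets of $U$, whether some subfamily partitions $U$. The main challenge is that EPSC only constrains the total size of the union and says nothing about the number of sets chosen or how much they overlap, so I need a gadget that turns ``union of size exactly $k$'' into both ``exactly $q$ sets used'' and ``no pairwise overlap inside $U$.''

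The approach is to attach a large block of private tags to each set and tune $k$ so that the arithmetic works out in only one way. Given an X3C instance $(U,S)$, I let $M := 3q+1$, introduce $M$ fresh tags $t_j^1,\ldots,t_j^M$ per set $s_j \in S$, form the padded sets $\tilde{s}_j := s_j \cup \{t_j^1,\ldots,t_j^M\}$, and hand the EPSC decider the enlarged universe $\tilde{U} := U \cup \bigcup_j \{t_j^1,\ldots,t_j^M\}$, the family $\{\tilde{s}_j\}$, and the target $k := 3q + Mq$. The reduction is plainly polynomial.

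For correctness I will argue both directions. The easy direction is that a genuine exact cover uses $q$ disjoint 3-sets whose padded versions contribute all $3q$ elements of $U$ together with $Mq$ distinct tags, for a union of size exactly $k$. The converse is where the parameter choice bites: a candidate EPSC solution using $t$ sets and covering $c$ elements of $U$ gives union size $c + Mt$, so equating this to $3q+Mq$ yields $M(t-q) = 3q-c$. The bounds $0 \le c \le 3q$ together with $M > 3q$ then force $t=q$ and $c=3q$, and $q$ sets of size 3 whose union has $3q$ elements must be pairwise disjoint, recovering an exact cover of $U$.

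The main obstacle, and really the only design choice, is picking the padding parameter $M$ large enough that the single equation $c+Mt=k$ in the two unknowns $t$ and $c$ has a unique feasible integer solution; once $M > 3q$ is in hand, the rest is routine.
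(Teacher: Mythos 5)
Your proposal is correct and is essentially the paper's own reduction: both pad each 3-set with a private block of more than $|U|$ auxiliary elements and set the target union size to $|U|$ plus the padding contributed by exactly $q$ sets, so that the arithmetic forces exactly $q$ pairwise-disjoint sets covering all of $U$. Your phrasing via the single equation $M(t-q)=3q-c$ is a slightly cleaner way to present the same case analysis the paper does with $b>t$ and $b<t$.
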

\begin{proof}
The reduction is from the well-known NP-hard problem \textsc{Exact Cover by 3-Sets}, \textsc{X3C}, which asks, given a universe $U$ of exactly $3t$ elements, and a collection $S$ of 3-element subsets of $U$, if there is a sub-collection $S'\subset S$ of size $t$ such that $\cup_{s\in S'} s = U$.  For each 3-element subset $s_i\in S$, define auxiliary elements $z_{i,1},\ldots,z_{i,\ell}$, with $\ell$ to be determined shortly.  Given an instance of \textsc{X3C}, we construct an instance of \textsc{Exact Partial Set Cover}.  The universe we define as $U\cup\{z_{i,j}\}$ and the collection of subsets we define as the collection $\{s_i\cup\{z_{i,1},\ldots,z_{i,\ell}\}\}$.  Finally, let $k=|U|+\ell t$.  This is a polynomial-time reduction as long as $\ell$ is only polynomially large.

If there is an exact cover by 3-sets, then the corresponding collection of subsets will have union exactly $|U|+\ell t$:  The non-auxiliary elements of this union must cover $U$, and since it is a size $t$ collection, there are $\ell t$ auxiliary elements in the union.  Conversely, if there is an exact partial set cover of size $b$ whose union is size $k=|U|+\ell t$, this collection must be exactly size $t$:  If $b>t$, then the size of the union, which is at least $\ell b$, must be strictly larger than $k$ as long as $\ell > |U|$.  And if $b<t$, then the size of the union is no more than $|U| +\ell b$, strictly smaller than $k$.  Thus the corresponding collection of subsets must be an exact cover, as it is size $t$ and its union is size $|U|$. 
\end{proof}

\begin{corollary}
Learning monotone disjunctions from label proportions is NP-hard.  
\end{corollary}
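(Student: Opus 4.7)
The plan is to invoke Theorem~\ref{thm:consistency_to_learning}: it suffices to show that the consistency problem for monotone disjunctions is NP-hard, and I will do this by a direct polynomial-time reduction from \textsc{Exact Partial Set Cover}, whose NP-hardness was just established in the preceding lemma.

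Given an instance of \textsc{Exact Partial Set Cover} with universe $U = \{u_1, \ldots, u_n\}$, collection $S = \{s_1, \ldots, s_m\}$, and target $k$, I would construct the consistency instance over the Boolean cube $\{0,1\}^m$ as follows. For each element $u_i \in U$, let $x_i \in \{0,1\}^m$ be the characteristic vector of its membership, so that $x_i^j = 1$ iff $u_i \in s_j$. Take $X = \{x_1, \ldots, x_n\}$ (each with multiplicity $1$) and set the target count to $k$. Clearly this reduction runs in polynomial time.

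The correctness follows from the key observation that a monotone disjunction $h_J(x) = \bigvee_{j \in J} x^j$ labels $x_i$ as $1$ exactly when $u_i \in \bigcup_{j \in J} s_j$. Hence the number of points in $X$ labeled $1$ by $h_J$ equals $|\bigcup_{j \in J} s_j|$. Thus there is a monotone disjunction consistent with $(X,k)$ iff there is a subfamily $S' \subseteq S$ (indexed by $J$) with $|\bigcup_{s \in S'} s| = k$, which is precisely the \textsc{Exact Partial Set Cover} instance (the empty subfamily corresponds to the empty disjunction, covering the $k=0$ case). Combined with Theorem~\ref{thm:consistency_to_learning}, this gives NP-hardness of PAC learning monotone disjunctions from label proportions (unless $\textup{NP} = \textup{RP}$).

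I do not anticipate a genuine obstacle here: the reduction is essentially syntactic once one recognizes that membership in a union of sets is naturally a monotone disjunction in the membership bits. The only thing to double-check is the multiplicity condition in the definition of the consistency problem, but since every $x_i$ appears with multiplicity $a_i = 1$, the weighted count $\sum_{x_i : c(x_i) = 1} a_i$ coincides with the unweighted count $|\bigcup_{j \in J} s_j|$, so no bookkeeping subtlety arises.
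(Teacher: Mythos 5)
Your proposal is correct and matches the paper's proof essentially exactly: both reduce the consistency problem from \textsc{Exact Partial Set Cover} by encoding each universe element as its set-membership incidence vector, so that a monotone disjunction over set indices labels positively precisely the elements in the corresponding union, and then invoke Theorem~\ref{thm:consistency_to_learning}.
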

\begin{proof}
Using Theorem~\ref{thm:consistency_to_learning}, it suffies to reduce the consistency problem from \textsc{Exact Partial Set Cover}.  Given an instance $(U,S,k)$ of this problem, we construct an instance of the consistency problem $m=|U|$ examples $x_i$, each with $|S|$ bits.  Without loss of generality, we assume that $U=\{1,\ldots,m\}$.  We define the $j$th bit of $x_i$ to be $1$ if the $j$th set $s_j$ of the collection includes $i$, and otherwise we set the bit to be $0$.  We set the number of points to be labeled 1 as $k$.

The disjunction $d(x)  = x^{j_1} \vee \ldots \vee x^{j_\ell}$ will label exactly $k$ of these $m$ examples positively if and only if the union of the collection $S'=\{s_{j_1}, \ldots, s_{j_\ell}\}$ is size exactly $k$:  If $d(x_i) = 1$, then $x_{i}^j = 1$ for at least one $j\in\{j_1,\ldots,j_\ell\}$, i.e.\ $i$ is in $S'$.  And if $d(x_i) = 0$ , then $x_{i}^j = 0$ for all $j\in\{j_1,\ldots,j_\ell\}$, i.e.\ $i$ is not in any of the sets in $S'$.  Thus, the number of examples that $d$ labels positively is the size of the union of $S'$.
\end{proof}

Using a similar construction, it can be shown that learning monotone conjunctions is also NP-hard:
\begin{corollary}
Learning monotone conjunctions from label proportions is NP-hard.  
\end{corollary}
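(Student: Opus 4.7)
The plan is to invoke Theorem~\ref{thm:consistency_to_learning} again, so it suffices to show that the consistency problem for monotone conjunctions is NP-hard. I would once more reduce from \textsc{Exact Partial Set Cover}, relying on the De Morgan duality between conjunctions and disjunctions: for any bit string $y$ with $y^j = 1 - x^j$, the conjunction $x^{j_1} \wedge \cdots \wedge x^{j_\ell}$ equals $1$ on $y$ if and only if the disjunction $y^{j_1} \vee \cdots \vee y^{j_\ell}$ equals $0$ on $x$. So I can reuse the construction from the disjunction case with the bits negated and the target count complemented.

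Concretely, given an instance $(U, S, k)$ of \textsc{Exact Partial Set Cover}, where without loss of generality $U = \{1,\ldots,m\}$, I would build $m$ examples $x_1,\ldots,x_m \in \{0,1\}^{|S|}$ by setting the $j$th bit of $x_i$ to be $0$ if $i \in s_j$ and $1$ otherwise. I would then set the target number of positively labeled points to be $m - k$. For any monotone conjunction $c(x) = x^{j_1} \wedge \cdots \wedge x^{j_\ell}$, we have $c(x_i) = 1$ if and only if $i \notin s_j$ for every $j \in \{j_1,\ldots,j_\ell\}$, i.e.\ if and only if $i \notin s_{j_1} \cup \cdots \cup s_{j_\ell}$. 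Hence the number of examples labeled $1$ by $c$ is $m - |s_{j_1} \cup \cdots \cup s_{j_\ell}|$, which equals $m-k$ precisely when the chosen subfamily has union of size $k$.

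This gives a polynomial-time reduction from \textsc{Exact Partial Set Cover} to the consistency problem for monotone conjunctions, establishing NP-hardness of the latter. Applying Theorem~\ref{thm:consistency_to_learning} then yields the corollary. The only step that requires any care is the accounting: making sure that the complementation $k \mapsto m - k$ is a valid integer in the consistency instance (which is automatic since $k \leq |U| = m$) and that the bit-flip preserves the polynomial size of the instance (which it does trivially). No genuinely new obstacle arises beyond what was already handled in the disjunction case.
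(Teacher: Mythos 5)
Your reduction is correct and is precisely the ``similar construction'' the paper alludes to without spelling out: complementing the bits so that a monotone conjunction labels $x_i$ positively exactly when $i$ avoids the union of the chosen sets, and replacing the target count $k$ by $m-k$. The correspondence between subfamilies and conjunctions (including the empty ones) is exact, so Theorem~\ref{thm:consistency_to_learning} applies and the argument goes through.
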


\section{Classes efficiently PAC learnable from label proportions}\label{sec:learnable}

In Section~\ref{sec:hardness}, we gave a number of examples of classes that are hard to LLP learn.  We now turn to examples of classes that can be efficiently LLP learned.  Certainly, as long as labelings in a given hypothesis class are efficiently enumerable, then finite classes $H$ are certainly PAC learnable from label proportions in time $|H|$.  Or instead, by enumerating only distinct hypotheses on the sample, assuming that this is efficient, learning can be achieved in $m^{d}$ time using Sauer's lemma.  This immediately implies that all such classes with constant $d$ are learnable from label proportions.  But there are also examples of classes with finite but larger VC dimension that can be efficiently LLP learned.

Consider the following hypothesis class which only allows hypotheses whose positive labels are close to each other:
\[H_k = \{h:\{1,\ldots,2^n\}\rightarrow\{0,1\} : \max_{h(i)=h(j)=1}|i-j| \le k\}.\]

There are still exponentially many functions and $VC(H_k)=k$.  For $k$ sufficiently small, this class is efficiently learnable:
\begin{observation}
PAC learning $H_k$ from label proportions has an $O(2^km)$ time algorithm.  
\end{observation}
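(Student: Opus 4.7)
The plan is to perform empirical risk minimization for $H_k$ by enumerating all ``candidate shapes'' of hypotheses as restricted to the sample, and then invoke Theorem~\ref{thm:vc_gen_bound} to convert the empirical guarantee into a distributional one. The key observation is that a hypothesis $h \in H_k$ affects $\hat{p}_h$ only through (i) the position of the window $[\ell, \ell+k]$ that contains its positive points, and (ii) the labeling of the sample points inside that window.

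First, I would argue that it suffices to consider windows whose left endpoint $\ell$ is itself a sample point: if the window of $h$'s positive support is not so anchored, we can slide it rightward until its left endpoint first coincides with a sample point without changing which sample points are captured, and hence without changing $\hat{p}_h$. This leaves at most $m$ windows to examine. Next, since a window of width $k+1$ contains at most $k+1$ distinct positions from the domain, there are at most $2^{k+1} = O(2^k)$ labelings per window. For each of the $O(2^k m)$ resulting candidate hypotheses I would compute $\hat{p}_h$ in constant amortized time and retain the one whose empirical proportion is closest to $\hat{p}_c$.

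The distributional guarantee then follows from Theorem~\ref{thm:vc_gen_bound}: since $\VC(H_k) = k$, a sample of size $m = \mathrm{poly}(k, 1/\epsilon, 1/\delta)$ ensures that any hypothesis minimizing $|\hat{p}_h - \hat{p}_c|$ also satisfies $|p_h - p_c| \le \epsilon$ with probability at least $1-\delta$. The overall running time is dominated by the enumeration, giving the claimed $O(2^k m)$ bound.

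The only mild obstacle is to justify rigorously that restricting to windows anchored at sample points suffices. This reduces to the observation that as $\ell$ sweeps across $\{1, \ldots, 2^n\}$, the set $S \cap [\ell, \ell+k]$ changes only at positions where a sample point enters or leaves the window, and there are $O(m)$ such transition points, so no potentially-winning labeling of the sample is missed by the enumeration.
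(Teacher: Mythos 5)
Your proposal is correct and matches the paper's argument in all essentials: both enumerate $O(m)$ candidate windows of width $k$ times $O(2^k)$ labelings of the sample points inside each window, perform ERM over these candidates, and rely on Theorem~\ref{thm:vc_gen_bound} with $\VC(H_k)=k$ for the distributional guarantee. The only (immaterial) difference is that you anchor windows at sample points in the domain while the paper enumerates runs of consecutive points in the sorted sample, and you make the generalization step explicit where the paper leaves it implicit.
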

Order the $m$ examples in $\{1,\ldots,2^n\}$, and for each length $k$ subset, of which there are $m-k+1$ of them, check all $2^k$ possible labelings.  Now when $k=O(\log n)$, this is a polynomial-time algorithm for learning $H_k$ from label proportions even though the VC dimension is not constant.

In the classical PAC setting, when it is hard to learn under an arbitrary distribution, it is often still valuable to show that learning can still be done in special cases, such as the uniform distribution.  We now give an example, namely half-spaces, where it is easy to learn under the uniform distribution.

The idea to find a half-space that classifies the given proportion $\hat{p}$ positively is to take a random half-space through the origin, and then move it in the direction of its normal vector, and stop when the half-space classifies the input $p$ proportion of the sample positively.  With high probability, this will be possible because no two points in the sample will be projected to the same point on the normal vector.

\begin{proposition}
The class of half-spaces in $n$ dimensions is learnable from label proportions under the uniform distribution over $\{0,1\}^n$.
\end{proposition}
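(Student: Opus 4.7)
The plan is to follow the sketch preceding the proposition: sample a random direction $w \in \mathbb{R}^n$, then choose a threshold $t$ so that the half-space $\{x : w \cdot x \geq t\}$ labels exactly the right number of sample points positively, and argue via the uniform convergence bound of Theorem~\ref{thm:vc_gen_bound} that this forces $p_h$ to be close to $p_c$.

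In more detail, given a sample $S = \{x_1, \ldots, x_m\} \subseteq \{0,1\}^n$ from the uniform distribution and the value $\hat{p}_c$, I would set $k = m\hat{p}_c$, draw $w \in \mathbb{R}^n$ with i.i.d.\ standard Gaussian entries, and compute the $m$ projections $w \cdot x_i$. For each of the finitely many pairs $i \neq j$, the event $w \cdot (x_i - x_j) = 0$ has probability zero, so with probability $1$ all $m$ projections are distinct. Sort them in decreasing order and pick $t$ strictly between the $k$-th and $(k{+}1)$-st largest, so that the hypothesis $h(x) = \mathds{1}[w \cdot x \geq t]$ labels exactly $k$ sample points positively. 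By construction $\hat{p}_h = \hat{p}_c$, and $h$ is itself a proper half-space.

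Correctness then follows from Theorem~\ref{thm:vc_gen_bound} applied to the class $H$ of half-spaces in $\mathbb{R}^n$, which has VC dimension at most $n+1$. The theorem gives, with probability at least $1-\delta$ over $S$, simultaneously for every $h \in H$,
\[
|p_h - p_c| \;\le\; |\hat{p}_h - \hat{p}_c| \;+\; O\!\left(\sqrt{\tfrac{n\log(m/n) + \log(1/\delta)}{m}}\right).
\]
The first term vanishes by construction, and the second is at most $\epsilon$ whenever $m = \Omega\!\left((n\log n + \log(1/\delta))/\epsilon^2\right)$, which meets the polynomial-sample requirement of Definition~\ref{defn:pac}.

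The main subtlety is ensuring that the thresholding step is always well-defined, i.e.\ that some $t$ produces exactly $k$ positive labels on $S$. This is precisely where the random choice of $w$ is used: with probability one the projections $w\cdot x_i$ are pairwise distinct, so they partition the real line into $m+1$ open intervals, any one of which can be used to realize any integer count between $0$ and $m$. Everything else reduces to plugging the VC bound for half-spaces into Theorem~\ref{thm:vc_gen_bound}; notably, the uniform distribution is used only insofar as it is a distribution over $\{0,1\}^n$, and the same argument goes through under arbitrary distributions on this domain.
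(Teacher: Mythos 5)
Your construction is essentially the paper's proof: pick a random direction, project the sample onto it, and slide a threshold until exactly $k = m\hat{p}_c$ points are positive, then invoke Theorem~\ref{thm:vc_gen_bound} with $\VC = O(n)$. Two points are worth flagging. First, your ``probability zero'' argument for distinctness of the projections is stated over the reals; the paper additionally argues that with high probability every pair of projections is separated by at least $2^{-\mathrm{poly}(n)}$, so that the direction and threshold can be represented and manipulated with polynomially many bits. This is a refinement of the same idea rather than a different argument, but it is what makes the procedure an actual algorithm. Second, your closing claim that ``the same argument goes through under arbitrary distributions on this domain'' is false, and the uniform distribution is used for more than you credit it. Under an arbitrary distribution the sample is a multiset: if a point appears with multiplicity $a_i$, then as the threshold slides past its projection the positive count jumps by $a_i$, so the realizable counts are the prefix sums of the multiplicities in the order induced by $w$, and the target count $k$ need not be among them (e.g.\ two atoms with unequal multiplicities ordered unfavorably by $w$). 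The uniform distribution over $\{0,1\}^n$ guarantees, for $m$ polynomial in $n$, that all sample points are distinct with high probability, which is exactly what makes every count in $\{0,\ldots,m\}$ realizable; the paper notes this step explicitly. Restricted to the proposition as stated, your proof is correct.
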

\begin{proof}
Since the VC-dimension of half-spaces is linear in $n$ by Radon's theorem~\cite{mohri2012foundations}, using Theorem~\ref{thm:vc_gen_bound} it certainly suffices to be able to efficiently find a half-space $h$ such that $\hat{p_h} = p$ with high probability.  Consider a hyperplane $P$ of dimension $n-1$ through the origin and $v$ a normal vector defining $P$.  

Our goal will be to find such a vector $v$ such that no two points in $\{0,1\}^n$ project more than exponentially close to each other (in terms of $n$) on $v$.  This allows us to use only a polynomial number of bits to represent each projected point while still being able to find a hyperplane that separates every pair of consecutive projected points.  

It suffices to choose $v$ uniformly at random.  To show this, consider an arbitrary pair of points $x$ and $y$ in $\{0,1\}^n$ and consider the line $\ell$ that passes through these two points.  If $v$ and $\ell$ are perpendicular, then $x$ and $y$ will project onto the same point on $v$.  More generally, we can find the maximum angle between $v$ and $\ell$ so that the distance as a function of $n$, let's call this $d'(n)$, between the two projected points is not too small.  In particular, suppose $d'(n) = o(1/2^{n^c})$ for all constants $c>0$.  A distance of $o(1/2^{n^c})$ would require $\omega(n^c)$ bits to distinguish the two points.  For $x$ and $y$ distance $d$ apart, this maximum angle between $v$ and $\ell$ is $\sin^{-1}(d'(n)/d) = O\left(d'(n)/d\right)$ using the Taylor approximation for $\sin^{-1}(x)$.  

There are $O(2^{n^2})$ pairs of points since the points come from $\{0,1\}^n$.  Then the total sum of angles that would result in any pair of projected points being too close is $O\left(\frac{2^{n^2} d'(n)}{d}\right)$, which goes to $0$ exponentially quickly in $n$ because $d$ is at least a constant and $d'(n)=o(1/2^{n^2})$.  Thus, with high probability, no two points in $\{0,1\}^n$ project to the same point on $v$, or project more than exponentially close to each other on $v$.  

Given $m$ examples, setting $m$ to be polynomial in $n$ ensures with high probability that all examples are distinct, and therefore no two examples project more than exponentially close to each other on $v$.  Since $\hat{p}_c = i/m$ for some $i\in\{0,1,\ldots,m\}$, we need to find a plane parallel to $P$ such that the corresponding linear threshold function classifies $i$ of the sample points positively.  For each pair of consecutive projected points $cv$ and $c'v$ on $v$ for real number $c$ and $c'$, consider the half-space given by the plane defined by the points $p\in\mathbb{R}^n$ satisfying
$v\left(p - \left(\frac{c+c'}{2}\right)v\right) = 0,$ so that these two points are classified differently by the half-space.  Thus one of these half-spaces (or the half-spaces classifying all points positively or negatively) will have $\hat{p}_h = i/m$ since no two points in the sample project onto the same point on $v$.\end{proof}

While we have shown that it is strictly harder to PAC learn from label proportions than to PAC learn, introducing noise to the models changes the relationship between these two models.  For example, PAC learning parities with unknown $\eta$ white-label noise is hard under the uniform distribution, as discussed above, but PAC learning parities from label proportions with white-label noise is easy under the uniform distribution.  In our model, that means each label is flipped i.i.d.~with probability some unknown $\eta$, and the proportion of noisy positive labels $\hat{p}^\eta$ is given as input instead, but otherwise the learning requirement remains stays the same.

\begin{observation}
The class of parities is learnable from label proportions under the uniform distribution and unknown $\eta$ white-label noise.
\end{observation}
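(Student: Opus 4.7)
The plan is to exploit a severe collapse in the range of $p_c$ values for parities under the uniform distribution. Since $h_a(x) = a \cdot x \bmod 2$, the zero parity ($a = 0$) satisfies $p_{h_0} = 0$, while for every nonzero $a$, by the orthonormality of the parity basis under the uniform distribution, exactly half of the inputs satisfy $a \cdot x = 1$, so $p_{h_a} = 1/2$. Hence the set $\{p_{h_a} : a \in \{0,1\}^n\}$ contains only two values, $0$ and $1/2$. Consequently, to output a hypothesis $h$ with $p_h = p_c$ it suffices to decide which of these two cases the target falls into and then output any fixed witness: $h_0$ or, say, $h_{e_1}$.

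Next I would analyze the expected noisy proportion. Because each label is flipped independently with probability $\eta$, we have $\mathbb{E}[\hat{p}_c^\eta] = (1-\eta)p_c + \eta(1-p_c) = \eta + (1-2\eta)p_c$, which evaluates to $\eta$ when $a = 0$ and to $1/2$ when $a \neq 0$. These two possible expectations are separated by $1/2 - \eta \ge 1/2 - \eta' > 0$, where $\eta'$ is the upper bound on the noise rate supplied to the learner (using the noisy-PAC convention from Section~\ref{sec:hardness}).

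With the gap in hand, the algorithm is straightforward. Draw $m = O\!\left(\log(1/\delta)/(1-2\eta')^2\right)$ unlabeled examples together with the noisy proportion $\hat{p}_c^\eta$. By Hoeffding's inequality, $|\hat{p}_c^\eta - \mathbb{E}[\hat{p}_c^\eta]| \le (1/2 - \eta')/3$ with probability at least $1 - \delta$. Threshold $\hat{p}_c^\eta$ at $(1/2 + \eta')/2$: if it lies below, output $h_0$; otherwise, output $h_{e_1}$. Under the good event, the threshold correctly identifies whether $p_c = 0$ or $p_c = 1/2$, and the returned hypothesis satisfies $p_h = p_c$ exactly, so $|p_h - p_c| = 0 \le \epsilon$ trivially.

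There is essentially no obstacle here, which is precisely the contrast the observation is drawing with classical noisy PAC learning of parities: in the LLP setting we never need to recover $a$ itself, only the coarse statistic $p_c$, and the uniform distribution forces this statistic to take only two well-separated values. The only bookkeeping step is to verify that the runtime is polynomial in $n$, $1/\epsilon$, $\log(1/\delta)$, and $1/(1-2\eta')$, which is immediate since the algorithm just averages labels, compares to a fixed threshold, and outputs one of two pre-specified parities.
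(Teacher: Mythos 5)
Your proposal is correct and follows essentially the same route as the paper: observe that under the uniform distribution $p_c$ takes only the two values $0$ and $1/2$, that noise maps these to $\eta$ and $1/2$ respectively, and then distinguish the two cases via a Hoeffding bound on the noisy proportion (the paper packages this last step as an appeal to its gap-based sample complexity observation rather than writing out the threshold explicitly). Your version is somewhat more explicit about the threshold, the role of the known upper bound $\eta'$, and the resulting sample complexity, but there is no substantive difference in approach.
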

\begin{proof}
Let $p^\eta_c$ be the proportion of positive labels under $\eta$ noise and parity $c$.  Note $p^\eta_c$ is always 
\[(1-\eta)p_c+\eta(1-p_c) = p_c(1-2\eta)+\eta,\] but for any non-trivial parity $c$, $p_c = 1/2$, so $p^\eta_c = 1/2$.  Then Observation~\ref{obs:gap_sample_complexity} implies that we may distinguish efficiently the trivial parity from the non-trivial parities and in the case that $p^\eta_c = 1/2$ we may return any non-trivial parity.
\end{proof}

\section{Conclusion}

In this paper we formalized a model for learning a hypothesis class by only examples drawn from a distribution and the proportion of them receiving
each label, with the goal of finding a hypothesis that matches these statistics on the underlying distribution.

We give some initial results into a learning theory for this task, including that if a class with finite VC dimension is efficiently learnable from label proportions, it is automatically also efficiently properly PAC learnable.
On the other hand, we give an independence result that implies that 
learning with label proportions will not admit a nice VC-like characterization.
We also give examples where it is possible to efficiently PAC learn from label proportions, which may be surprising given that this is a low-information setting, including half-spaces under the uniform distribution.

These results are for the binary setting and only for the `one bag' version of the problem.  We leave  for future work the analysis of the case where there is more than one bag of examples and each bag's proportion of labels is given.  For that case, and in other similar settings where the 
learner is given more information, we expect there to be more positive algorithmic results.

\section*{Acknowledgements}
We thank Avrim Blum for pointing out that the proof and statement of Theorem 5 are incorrect in the version of this paper that was published in IJCAI 2017~\cite{fish2017llp};
in addition to including new results, we've corrected this error herein.
This work was supported in part by NSF awards CCF-1848966 and CCF-1934915.

\bibliographystyle{plain}
\bibliography{llp_paper}

\end{document}